\newcommand*{\rom}[1]{\expandafter\@slowromancap\romannumeral #1@}
\newcommand{\la}{\langle}
\newcommand{\ra}{\rangle}
\def \layerLen {L}
\def \epochNum {q}
\def \nnweight{\wb}
\def \BB{\mathbb{B}}
\def \bufferLen{H}
\def \nnIter{n}
\def \empR{\hat r}
\def \fosPara{\ell_{\text{Lip}}}
\def \algname{\text{Neural-LinUCB}}
\title{\huge Neural Contextual Bandits with Deep Representation and Shallow Exploration}
\author{%
  Pan Xu\thanks{Department of Computer Science, University of California, Los Angeles, Los Angeles, CA 90095; e-mail: {\tt panxu@cs.ucla.edu}}
  ~~~and~~~
  Zheng Wen\thanks{DeepMind, Mountain View, CA 94043; e-mail: {\tt zhengwen@google.com}}
  ~~~and~~~
  Handong Zhao\thanks{Adobe Research, San Jose, CA 95110; e-mail: {\tt hazhao@adobe.com}}
  ~~~and~~~
  Quanquan Gu\thanks{Department of Computer Science, University of California, Los Angeles, Los Angeles, CA 90095; e-mail: {\tt qgu@cs.ucla.edu}}   
}
\begin{document}

\date{}
\maketitle

\begin{abstract}
  We study a general class of contextual bandits, where each context-action pair is associated with a raw feature vector, but the reward generating function is unknown. We propose a novel learning algorithm that transforms the raw feature vector using the last hidden layer of a deep ReLU neural network (deep representation learning), and uses an upper confidence bound (UCB) approach to explore in the last linear layer (shallow exploration). We prove that under standard assumptions, our proposed algorithm achieves $\tilde{O}(\sqrt{T})$ finite-time regret, where $T$ is the learning time horizon. Compared with existing neural contextual bandit algorithms, our approach is computationally much more efficient since it only needs to explore in the last layer of the deep neural network.
\end{abstract}

\section{Introduction}\label{sec:intro}
Multi-armed bandits (MAB) \citep{auer2002finite,audibert2009exploration,lattimore2020bandit} are a class of online decision-making problems where an agent needs to learn to maximize its expected cumulative reward while repeatedly interacting with a partially known environment.
%
%
Based on a bandit algorithm (also called a strategy or policy), in each round, the agent adaptively chooses an arm, and then observes and receives a reward associated with that arm.
Since only the reward of the chosen arm will be observed (bandit information feedback), a good bandit algorithm has to deal with the exploration-exploitation dilemma: trade-off between pulling the best arm based on existing knowledge/history data (exploitation) and trying the arms that have not been fully explored (exploration).

In many real-world applications, the agent will also be able to access detailed contexts associated with the arms. For example, when a company wants to choose an advertisement to present to a user, the recommendation will be much more accurate if the company takes into consideration the contents, specifications, and other features of the advertisements in the arm set as well as the profile of the user. To encode the contextual information, contextual bandit models and algorithms have been developed, and widely studied both in theory and in practice \citep{danistochastic2008stochastic,rusmevichientong2010linearly,li2010contextual,chu2011contextual,abbasi2011improved}. 
Most existing contextual bandit algorithms assume that the expected reward of an arm at a context is a linear function in a known context-action feature vector, which leads to many useful algorithms such as LinUCB \citep{chu2011contextual}, OFUL \citep{abbasi2011improved}, etc. The representation power of the linear model can be limited in applications such as marketing, social networking, clinical studies, etc., where the rewards are usually counts or binary variables. The linear contextual bandit problem has also been extended to richer classes of parametric bandits such as the generalized linear bandits \citep{filippi2010parametric,li2017provably} and kernelised bandits \citep{valko2013finite,chowdhury2017kernelized}. 

With the resurgence of deep neural networks and their phenomenal performances in many machine learning tasks \citep{lecun2015deep,goodfellow2016deep}, there has emerged a line of work that employs deep neural networks to increase the representation power of contextual bandit algorithms. \citet{zhou2019neural} developed the NeuralUCB algorithm, which can be viewed as a direct extension of linear contextual bandits \citep{abbasi2011improved}, where they use the output of a deep neural network with the feature vector as input to approximate the reward. \citet{zhang2020neural} adapted neural networks in Thompson Sampling \citep{thompson1933likelihood,chapelle2011empirical} for both exploration and exploitation and proposed the NeuralTS algorithm. For a fixed time horizon $T$, it has been proved that both NeuralUCB and NeuralTS achieve a $O(\tilde d\sqrt{T})$ regret  bound, where $\tilde d$ is the effective dimension of a neural tangent kernel matrix which can potentially scale with $O(TK)$ for $K$-armed bandits. This high complexity is mainly due to their exploration over the entire neural network parameter space. A more realistic and efficient way of using deep neural networks in contextual bandits may be to just explore different arms using the last layer as the exploration parameter. More specifically, \citet{riquelme2018deep} provided an extensive empirical study of benchmark algorithms for contextual-bandits through the
lens of Thompson Sampling \citep{thompson1933likelihood,chapelle2011empirical}. They found that decoupling representation learning and uncertainty estimation
improves performance.

In this paper, we study a new neural contextual bandit algorithm, which learns a mapping to transform the raw features associated with each context-action pair using a deep neural network, and then performs an upper confidence bound (UCB)-type (shallow) exploration over the last layer of the neural network. We prove a sublinear regret of the proposed algorithm by exploiting the UCB exploration techniques in linear contextual bandits \citep{abbasi2011improved} and the analysis of deep overparameterized neural networks using neural tangent kernel \citep{jacot2018neural}. Our theory confirms the effectiveness of decoupling the deep representation learning and the UCB exploration in contextual bandits \citep{riquelme2018deep,zahavy2019deep}. 

\noindent\textbf{Contributions} we summarize the main contributions of this paper as follows.
\begin{itemize}[leftmargin=*]
    \item We propose a contextual bandit algorithm, $\algname$, for solving a general class of contextual bandit problems without any assumption on the structure of the reward generating function. The proposed algorithm learns a deep representation to transform the raw feature vectors and performs UCB-type exploration in the last layer of the neural network, which we refer to as deep representation and shallow exploration. Compared with  \cite{zhou2019neural}, our algorithm is much more computationally efficient in practice.
    \item We prove a $\tilde O(\sqrt{T})$ regret for the proposed $\algname$ algorithm, which matches the sublinear regret of linear contextual bandits \citep{chu2011contextual,abbasi2011improved}. To the best of our knowledge, this is the first work that theoretically shows the Neural-Linear schemes of contextual bandits are able to converge, which validates the empirical observation by \citet{riquelme2018deep}.
    \item We conduct experiments  on contextual bandit problems based on real-world datasets, which demonstrates the good performance and computational efficiency of $\algname$ over NeuralUCB and well aligns with our theory.
\end{itemize}

\subsection{Additional related work}


There is a line of related work to ours on the recent advance in the optimization and generalization analysis of deep neural networks. In particular, \citet{jacot2018neural} first introduced the neural tangent kernel (NTK) to characterize the training dynamics of network outputs in the infinite width limit. From the notion of NTK, a fruitful line of research emerged and showed that loss functions of deep neural networks trained by (stochastic) gradient descent can converge to the global minimum \citep{du2018gradient,allen2019convergence,du2019gradient,zou2018stochastic,zou2019improved}. The generalization bounds for overparameterized deep neural networks are also established in \citet{arora2019fine,arora2019exact,allen2019learning,cao2019generalization,cao2019generalization2}. Recently, the NTK based analysis is also extended to the study of sequential decision problems including NeuralUCB \citep{zhou2019neural}, and reinforcement learning algorithms  \citep{cai2019neural,liu2019neural,Wang2020Neural,xu2020finite}.

Our algorithm is also different from \citet{langford2008epoch,agarwal2014taming} which reduce the bandit problem to supervised learning. 
Moreover, their algorithms need to access an oracle that returns the optimal policy in a policy class given a sequence of context and reward vectors, whose regret depends on the VC-dimension of the policy class. 

\noindent\textbf{Notation} We use $[k]$ to denote a  set $\{1,\ldots,k\}$, $k\in\NN^+$. $\|\xb\|_2=\sqrt{\xb^{\top}\xb}$ is the Euclidean norm of a vector $\xb\in\RR^d$. For a matrix $\Wb\in\RR^{m\times n}$, we denote by $\|\Wb\|_2$ and $\|\Wb\|_F$ its operator norm and Frobenius norm respectively. 
For a semi-definite matrix $\Ab\in\RR^{d\times d}$ and a vector $\xb\in\RR^d$, we denote the Mahalanobis norm as $\|\xb\|_{\Ab}=\sqrt{\xb^{\top}\Ab\xb}$. 
Throughout this paper, we reserve the notations $\{C_i\}_{i=0,1,\ldots}$ to represent absolute positive constants that are independent of problem parameters such as dimension, sample size, iteration number, step size, network length and so on. 
The specific values of $\{C_i\}_{i=0,1,\ldots}$ can be different in different context. For a parameter of interest $T$ and a function $f(T)$, we use notations such as $O(f(T))$ and $\Omega(f(T))$ to hide  constant factors and $\tilde O(f(T))$ to hide constant and logarithmic dependence of $T$. 

\section{Preliminaries}
In this section, we provide the background of contextual bandits and deep neural networks.

\subsection{Linear contextual bandits}\label{sec:lin_ucb}
A contextual bandit is characterized by a tuple $(\cS, \cA, r)$, where $\cS$ is the context (state) space, $\cA$ is the arm (action) space, and $r$ encodes the unknown \emph{reward generating function} at all context-arm pairs. A learning agent, who knows $\cS$ and $\cA$ but does not know the true reward $r$ (values bounded in $(0,1)$ for simplicity), needs to interact with the contextual bandit for $T$ rounds. At each round $t=1, \ldots, T$, the agent first observes a context $s_t \in \cS$ chosen by the environment; then it needs to adaptively select an arm $a_t \in \cA$ based on its past observations; finally it receives a reward $\hat{r}_t(\xb_{s,a_t})=r(\xb_{s,a_t})+\xi_t $, 
where $\xb_{s,a} \in \RR^d$ is a known feature vector for context-arm pair $(s, a) \in \cS \times \cA$, and $\xi_t$ is a random noise with zero mean. The agent's objective is to maximize its expected total reward over these $T$ rounds, which is equivalent to minimizing the pseudo regret \citep{audibert2009exploration}:
\begin{align}\label{def:pseudo_regret}
R_T=\EE\bigg[\sum_{t=1}^{T}\big( \hat r(\xb_{s_t, a_t^*}) - \hat r(\xb_{s_t, a_t}) \big)\bigg], 
\end{align}
where $a_t^* \in \argmax_{a \in \cA} \{r(\xb_{s_t, a})=\EE[\hat r(\xb_{s_t,a})]\}$. To simplify the exposition, we use $\xb_{t,a}$ to denote $\xb_{s_t, a}$ since the context only depends on the round index $t$ in most bandit problems, and we assume $\cA = [K]$.
%
%

In some practical problems, the agent has a prior knowledge that the reward-generating function $r$ has some specific parametric form. For instance, in linear contextual bandits, the agent knows that $r(\xb_{s,a}) = \xb_{s,a}^\top \btheta^*$ for some unknown weight vector $\btheta^*\in\RR^d$. One provably sample efficient algorithm for linear contextual bandits is Linear Upper Confidence Bound (LinUCB) \citep{abbasi2011improved}. Specifically, at each round $t$, LinUCB chooses action by the following strategy
\[
a_t=\argmax_{a \in [K]} \left\{ \xb^{\top}_{t, a}\btheta_{t} +\alpha_t \|\xb_{t,a}\|_{\Ab_{t}^{-1}} \right\},
\]
where $\btheta_{t}$ is a point estimate of $\btheta^*$, $\Ab_{t}=\lambda \mathbf{I}+\sum_{i=1}^{t}\xb_{i,a_i}\xb_{i,a_i}^{\top}$ with some $\lambda>0$ is a matrix defined based on the historical context-arm pairs, and $\alpha_t>0$ is a tuning parameter that controls the exploration rate in LinUCB. 

\subsection{Deep neural networks}\label{sec:neural_bandit}
In this paper, we use $f(\xb)$ to denote a neural network with input data $\xb\in\RR^d$.
Let $\layerLen$ be the number of hidden layers and $\Wb_{l}\in\RR^{m_{l}\times m_{l-1}}$ be the weight matrices in the $l$-th layer, where $l=1,\ldots,L$, $m_1=\ldots=m_{\layerLen-1}=m$ and $m_0=m_{\layerLen}=d$. 
Then a $\layerLen$-hidden layer neural network is defined as
\begin{align}\label{eq:def_network}
    f(\xb)=\sqrt{m}\btheta^{*\top}\sigma_{\layerLen}(\Wb_{\layerLen}\sigma_{\layerLen-1}(\Wb_{\layerLen-1}\cdots\sigma_1(\Wb_1\xb)\cdots)),
\end{align}
where $\sigma_l$ is an activation function and $\btheta^*\in\RR^{d}$ is the weight of the output layer. To simplify the presentation, we will assume $\sigma_1=\sigma_2=\ldots=\sigma_{\layerLen}=\sigma$ is the ReLU activation function, i.e.,  $\sigma(x)=\max\{0,x\}$ for $x\in\RR$.  We denote $\nnweight=(\text{vec}(\Wb_1)^{\top},\ldots,\text{vec}(\Wb_{\layerLen})^{\top})^{\top}$, which is the concatenation of the vectorized weight parameters of all hidden layers of the neural network. We also write $f(\xb;\btheta^*,\nnweight)=f(\xb)$ in order to explicitly specify the weight parameters of neural network $f$. It is easy to show that the dimension $p$ of vector $\nnweight$ satisfies $p=(\layerLen-2)m^2+2md$. To simplify the notation, we define $\bphi(\xb;\nnweight)$ as the output of the $L$-th hidden layer of neural network $f$.  
\begin{align}\label{eq:def_hidden_layer}
    \bphi(\xb;\nnweight)=\sqrt{m}\sigma(\Wb_{\layerLen}\sigma(\Wb_{\layerLen-1}\cdots\sigma(\Wb_1\xb)\cdots)).
\end{align}
Note that $\bphi(\xb;\nnweight)$ itself can also be viewed as a neural network with vector-valued outputs.

\section{Deep Representation and  Shallow  Exploration}\label{sec:algorithm}

The linear parametric form in linear contextual bandit might produce biased estimates of the reward due to the lack of representation power \citep{snoek2015scalable,riquelme2018deep}. 
In contrast, it is well known that deep neural networks are powerful enough to approximate an arbitrary function \citep{cybenko1989approximation}. Therefore, it would be a natural extension for us to guess that the reward generating function $r(\cdot)$ can be represented by a deep neural network. Nonetheless, deep neural networks usually have a prohibitively large dimension for weight parameters, which makes the exploration  in neural networks based UCB algorithm  inefficient \citep{kveton2020randomized,zhou2019neural}. 

In this work, we study a more realistic setting, where the hidden layers of a deep neural network are used to represent the features and the exploration is only performed in the last layer of the neural network \citep{riquelme2018deep,zahavy2019deep}. In particular, we assume that the reward generating function $r(\cdot)$ can be expressed as the inner product between a deep represented feature vector and an exploration weight parameter, namely, $r(\cdot)=\la\btheta^*,\bpsi(\cdot)\ra$, where $\btheta^*\in\RR^d$ is some  weight parameter and $\bpsi(\cdot)$ is an unknown feature mapping. This decoupling of the representation and the exploration will achieve the best of both worlds: efficient exploration in shallow (linear) models and high expressive power of deep models. To learn the unknown feature mapping, we propose to use a neural network to approximate it. In what follows, we will describe a neural contextual bandit algorithm that uses the output of the last hidden layer of a neural network to transform the raw feature vectors (\emph{deep representation}) and performs UCB-type exploration in the last layer of the neural network (\emph{shallow exploration}). Since the exploration is performed only in the last linear layer, we call this procedure $\algname$, which is displayed in Algorithm \ref{alg:deepUCB}.

Specifically, in round $t$, the agent receives an action set with raw features $\cX_t=\{\xb_{t,1},\ldots,\xb_{t,K}\}$. Then the agent chooses an arm $a_t$ that maximizes the following upper confidence bound:
\begin{align}\label{eq:neural_ucb_time_t}
\begin{split}
    a_{t}&=\argmax_{k\in[K]} \Big\{\la \bphi(\xb_{t,k};\nnweight_{t-1}),\btheta_{t-1}\ra+\alpha_t\|\bphi(\xb_{t,k};\nnweight_{t-1})\|_{\Ab_{t-1}^{-1}}\Big\},
\end{split}    
\end{align}
where $\btheta_{t-1}$ is a point estimate of the unknown weight in the last layer, $\bphi(\xb;\nnweight)$ is defined as in~\eqref{eq:def_hidden_layer}, $\nnweight_{t-1}$ is an estimate of all the weight parameters in the hidden layers of the neural network,   $\alpha_t>0$ is the algorithmic parameter controlling the exploration, and $\Ab_t$ is a matrix defined based on historical transformed features: \begin{align}\label{eq:def_design_matrix}
    \Ab_{t} =\lambda\Ib+\sum_{i=1}^t\bphi(\xb_{i,a_i};\nnweight_{i-1})\bphi(\xb_{i,a_i};\nnweight_{i-1})^{\top},
\end{align} 
and $\lambda>0$. After pulling arm $a_t$, the agent will observe a noisy reward $\empR_t:=\empR(\xb_{t,a_t})$ defined as 
\begin{align}\label{eq:nn_model}
   \empR(\xb_{t,k})=r(\xb_{t,k})+\xi_t ,
\end{align}
where $\xi_t$ is an independent  $\nu$-subGaussian random noise for some $\nu>0$ and $r(\cdot)$ is an unknown reward function. In this paper, we will interchangeably use notation $\empR_t$ to denote the reward received at the $t$-th step and an equivalent notation $\empR(\xb)$ to express its dependence on the feature vector $\xb$.

Upon receiving the  reward $\empR_t$, the agent updates its estimate $\btheta_t$ of the output  layer weight by using the same $\ell^2$-regularized least-squares estimate in linear contextual bandits \citep{abbasi2011improved}. In particular, we have
\begin{align}
    \btheta_{t}=\Ab_{t}^{-1}\bbb_{t},
\end{align}
where $\bbb_{t} =\sum_{i=1}^t\empR_i\phi(\xb_{i,a_i};\nnweight_{i-1})$. 

To save the computation, the neural network $\bphi(\cdot;\nnweight_t)$ will be  updated once every $\bufferLen$ steps. 
Therefore, we have $\nnweight_{(q-1)\bufferLen+1}=\ldots=\nnweight_{q\bufferLen}$ for $q=1,2,\ldots$. We call the time steps $\{(q-1)\bufferLen+1, \ldots, q\bufferLen\}$ an epoch with length $\bufferLen$. At time step $t=\bufferLen \epochNum$, for any $q=1,2,\ldots$, Algorithm \ref{alg:deepUCB} will retrain the neural network based on all the historical data. In Algorithm \ref{alg:update_nn}, our goal is to minimize the following empirical loss function:
\begin{align}\label{eq:def_loss_nn}
    \cL_{\epochNum}(\nnweight)=\sum_{i=1}^{\epochNum\bufferLen}\big(\btheta_{i}^{\top}\phi(\xb_{i,a_i};\nnweight)-\empR_i\big)^2.
\end{align}
In practice, one can further save computational cost by only feeding data $\{\xb_{i,a_i},\empR_i,\btheta_i\}_{i=(\epochNum-1)\bufferLen+1}^{q\bufferLen}$ from the $\epochNum$-th epoch into Algorithm \ref{alg:update_nn} to  update the parameter $\nnweight_t$, which does not hurt the performance since the historical information has been encoded into the estimate of $\btheta_i$. In this paper, we will perform the following gradient descent step
\begin{align*}
    \nnweight_{\epochNum}^{(s)}=\nnweight_{\epochNum}^{(s-1)}-\eta_{\epochNum}\nabla_{\nnweight}\cL_{\epochNum}(\nnweight^{(s-1)}).
\end{align*}
for $s=1,\ldots,\nnIter$, where $\nnweight_{\epochNum}^{(0)}=\nnweight^{(0)}$ is chosen as the same random initialization point. We will discuss more about the initial point $\nnweight^{(0)}$ in the next paragraph.
Then Algorithm \ref{alg:update_nn} outputs $\nnweight_{\epochNum}^{(\nnIter)}$ and we set it as the updated weight parameter $\nnweight_{\bufferLen \epochNum+1}$ in Algorithm \ref{alg:deepUCB}. 
In the next round, the agent will receive another action set $\cX_{t+1}$ with raw feature vectors and repeat the above steps to choose the sub-optimal arm and update estimation for contextual parameters. 

\noindent\textbf{Initialization:} Recall that  $\nnweight$ is the collection of all hidden layer weight parameters of the neural network. We will follow the same initialization scheme as used in \citet{zhou2019neural}, where each entry of the weight matrices follows some Gaussian distribution. Specifically, for any $l\in\{1,\ldots,\layerLen-1\}$, we set $\Wb_{l}=
\begin{bmatrix}
\Wb&\zero\\
\zero&\Wb
\end{bmatrix}$, 
where each entry of $\Wb$ follows distribution $N(0,4/m)$ independently; for $\Wb_{\layerLen}$, we set it as $\begin{bmatrix}
\Vb&-\Vb\\
\end{bmatrix}$,
where each entry of $\Vb$ follows distribution $N(0,2/m)$ independently. 

\noindent\textbf{Comparison with LinUCB and NeuralUCB:}
Compared with linear contextual bandits in Section \ref{sec:lin_ucb}, Algorithm \ref{alg:deepUCB} has a distinct feature that it learns a deep neural network to obtain a deep representation of the raw data vectors and then performs UCB exploration. This deep representation allows our algorithm to characterize more intrinsic and latent information about the raw data $\{\xb_{t,k}\}_{t\in[T],k\in[K]}\subset\RR^d$. However, the increased complexity of the feature mapping $\bphi(\cdot;\nnweight)$ also introduces great hardness in training. For instance, a  recent work by \cite{zhou2019neural} also studied the neural contextual bandit problem, but different from \eqref{eq:neural_ucb_time_t}, their algorithm (NeuralUCB) performs the UCB exploration on the entire network parameter space, which is $\RR^{p+d}$. Note that in \cite{zhou2019neural}, they need to compute the inverse of a matrix $\Zb_t\in\RR^{(p+d)\times(p+d)}$, which is defined in a similar way to the matrix $\Ab_t$ in our paper except that $\Zb_t$ is defined based on the gradient of the network instead of the output of the last hidden layer as in \eqref{eq:def_design_matrix}. In sharp contrast, $\Ab_t$ in our paper is only of size $d\times d$ and thus is much more efficient and practical in implementation, which will be seen from our experiments in later sections. 

We note that there is also a  similar algorithm to our $\algname$ presented in \citet{deshmukh2020self}, where they studied the self-supervised learning loss in contextual bandits with neural network representation for  computer vision problems. However, no regret analysis has been provided. When the feature mapping $\bphi(\cdot;\nnweight)$ is an identity function, the problem reduces to linear contextual bandits where we directly use $\xb_t$ as the feature vector. In this case, it is easy to see that Algorithm~\ref{alg:deepUCB} reduces to LinUCB \citep{chu2011contextual} since we do not need to learn the representation parameter $\nnweight$ anymore.

\noindent\textbf{Comparison with Neural-Linear:} Our algorithm is also similar to the Neural-Linear algorithm studied in \cite{riquelme2018deep}, which trains a deep neural network to learn a representation of the raw feature vectors, and then uses a Bayesian linear regression to estimate the uncertainty in the bandit problem. The difference between Neural-Linear \citep{riquelme2018deep} and  $\algname$ in this paper lies in the specific exploration strategies: Neural-Linear uses posterior sampling to estimate the weight parameter $\btheta^*$ via Bayesian linear regression, whereas  $\algname$  adopts upper confidence bound based techniques to estimate the weight $\btheta^*$. Nevertheless, both algorithms share the same idea of deep representation and shallow exploration, and we view our $\algname$ algorithm as one instantiation of the Neural-Linear scheme proposed in \cite{riquelme2018deep}.

\begin{algorithm}[t]
\caption{Deep Representation and Shallow Exploration ($\algname$)} 
\label{alg:deepUCB}
\begin{algorithmic}[1]
\STATE\textbf{Input}: regularization parameter $\lambda>0$, number of total steps $T$, episode length $H$, exploration parameters $\{\alpha_t>0\}_{t\in[T]}$
\STATE\textbf{Initialization:} $\Ab_{0}=\lambda\Ib$, $\bbb_0=\zero$; entries of $\btheta_{0}$ follow $N(0,1/d)$, and $\nnweight^{(0)}$ is initialized as described in Section \ref{sec:algorithm};  $\epochNum=1$; $\nnweight_0=\nnweight^{(0)}$
\FOR{$t=1,\ldots,T$}
\STATE receive feature vectors $\{\xb_{t,1},\ldots,\xb_{t,K}\}$
\STATE choose arm 
$a_t=\argmax_{k\in[K]} \btheta_{t-1}^{\top}\bphi(\xb_{t,k};\nnweight_{t-1})$ $+\alpha_t\|\bphi(\xb_{t,k};\nnweight_{t-1})\|_{\Ab_{t-1}^{-1}}$\label{algline:ucb_rule}, and obtain reward $\empR_t$
\STATE update $\Ab_{t}$ and $\bbb_{t}$ as follows: \\
    \hspace{0.1in}$\Ab_{t}=
    \Ab_{t-1}+\bphi(\xb_{t,a_t};\nnweight_{t-1})\bphi(\xb_{t,a_t};\nnweight_{t-1})^{\top},$
    \\ 
    \hspace{0.1in}$\bbb_{t}=
    \bbb_{t-1}+\empR_t\bphi(\xb_{t,a_t};\nnweight_{t-1}),$
\STATE update $\btheta_{t}=\Ab_{t}^{-1}\bbb_{t}$
\IF{mod$(t,\bufferLen)=0$}
\STATE $\nnweight_{t}$ $\leftarrow$ output of Algorithm \ref{alg:update_nn} 
\STATE $\epochNum=\epochNum+1$
\ELSE
\STATE $\nnweight_{t}=\nnweight_{t-1}$
\ENDIF
\ENDFOR
\STATE\textbf{Output}
\end{algorithmic} 
\end{algorithm}

\begin{algorithm}
\caption{Update Weight Parameters with Gradient Descent}
\label{alg:update_nn}
\begin{algorithmic}[1]
\STATE \textbf{Input:}  initial point $\nnweight_{\epochNum}^{(0)}=\nnweight^{(0)}$, maximum iteration number $\nnIter$, step size $\eta_{\epochNum}$,  and loss function defined in \eqref{eq:def_loss_nn}.
\FOR{$s=1,\ldots,\nnIter$}
\STATE $\nnweight_{\epochNum}^{(s)}=\nnweight_{\epochNum}^{(s-1)}-\eta_{\epochNum}\nabla_{\nnweight}\cL_{\epochNum}(\nnweight_{\epochNum}^{(s-1)}).
$
\ENDFOR
\RETURN $\nnweight_{\epochNum}^{(\nnIter)}$
\end{algorithmic}
\end{algorithm}

\section{Main Theory}

To analyze the regret bound of Algorithm \ref{alg:deepUCB}, we first lay down some important assumptions on the neural contextual bandit model.
\begin{assumption}\label{asp:nondegen}
For all $i\geq 1$ and $k\in[K]$, we assume that $\|\xb_{i,k}\|_2=1$ and its entries satisfy $[\xb_{i,k}]_j=[\xb_{j,k}]_{j+d/2}$.
\end{assumption}
The assumption that $\|\xb_{i,k}\|_2=1$ is not essential and is only imposed for simplicity, which is also used in \cite{zou2019improved,zhou2019neural}. Finally, the condition on the entries of $\xb_{i,k}$ is also mild since otherwise we could always construct $\xb_{i,k}'=[\xb_{i,k}^{\top},\xb_{i,k}^{\top}]^{\top}/\sqrt{2}$ to replace it. An implication of Assumption \ref{asp:nondegen} is that the initialization scheme in Algorithm \ref{alg:deepUCB} results in $\bphi(\xb_{i,k};\nnweight^{(0)})=\zero$ for all $i\in[T]$ and $k\in[K]$.


We further impose the following stability condition on the spectral norm of the neural network gradient:
\begin{assumption}\label{asp:gradient_fos}
There is a constant $\fosPara>0$ such that it holds 
\begin{align*}
    \bigg\|\frac{\partial\bphi}{\partial\nnweight}(\xb;\nnweight_0)-\frac{\partial\bphi}{\partial\nnweight}(\xb';\nnweight_0)\bigg\|_2\leq\fosPara\|\xb-\xb'\|_2,
\end{align*}
for all $\xb,\xb'\in\{\xb_{i,k}\}_{i\in[T],k\in[K]}$.
\end{assumption}
The inequality in Assumption \ref{asp:gradient_fos} looks like some Lipschitz condition. However, it is worth noting that here the gradient is taken with respect to the neural network weights while the Lipschitz condition is imposed on the feature parameter $\xb$. Similar conditions are widely made in nonconvex optimization \citep{wang2014optimal,balakrishnan2017statistical,xu2017speeding}, in the name of first-order stability, which is essential to derive the convergence of alternating optimization algorithms. Furthermore, Assumption \ref{asp:gradient_fos} is only required on the $TK$ training data points and a specific weight parameter $\nnweight_0$. Therefore, the condition will hold if the raw feature data lie in a benign subspace. A more thorough study of this stability condition is out of the scope of this paper, though it would be an interesting open direction in the theory of deep neural networks.

In order to analyze the regret bound of Algorithm \ref{alg:deepUCB}, we need to characterize the properties of the deep neural network in \eqref{eq:def_network} that is used to represent the feature vectors. Following a recent line of research \citep{jacot2018neural,cao2019generalization,arora2019exact,zhou2019neural}, we define the covariance between two data point $\xb,\yb\in\RR^d$ as follows.
\begin{align}
\begin{split}
    \tilde\bSigma^{(0)}(\xb,\yb)&=\bSigma^{(0)}(\xb,\yb)=\xb^{\top}\yb,\\
    \bLambda^{(l)}(\xb,\yb)&=\begin{bmatrix}
    \bSigma^{l-1}(\xb,\xb)& \bSigma^{l-1}(\xb,\yb)\\
    \bSigma^{l-1}(\yb,\xb)& \bSigma^{l-1}(\yb,\yb)
    \end{bmatrix},\\
    \bSigma^{(l)}(\xb,\yb)&=2\EE_{(u,v)\sim N(\zero,\bLambda^{(l-1)}(\xb,\yb))}[\sigma(u)\sigma(v)],\\
    \tilde\bSigma^{(l)}(\xb,\yb)&=2\tilde\bSigma^{(l-1)}(\xb,\yb)\EE_{u,v}[\dot\sigma(u)\dot\sigma(v)]+\bSigma^{(l)}(\xb,\yb),
\end{split}    
\end{align}
where $(u,v)\sim N(\zero,\bLambda^{(l-1)}(\xb,\yb))$, and $\dot\sigma(\cdot)$ is the derivative of activation function $\sigma(\cdot)$. We denote the neural tangent kernel (NTK) matrix $\Hb$ by a $\RR^{T K\times T K}$ matrix defined on the dataset of all feature vectors $\{\xb_{t,k}\}_{t\in[T],k\in[K]}$. Renumbering $\{\xb_{t,k}\}_{t\in[T],k\in[K]}$ as $\{\xb_{i}\}_{i=1,\ldots,T K}$, then each entry $\Hb_{ij}$ is defined as
\begin{align}\label{eq:def_ntk}
    \Hb_{ij} = \frac{1}{2}\big(\tilde\bSigma^{(\layerLen)}(\xb_i,\xb_j)+\bSigma^{(\layerLen)}(\xb_i,\xb_j)\big),
\end{align}
for all $i,j\in[T K]$. Based on the above definition, we impose the following assumption on $\Hb$.
\begin{assumption}\label{asp:ntk_pd}
The neural tangent kernel defined in \eqref{eq:def_ntk} is positive definite, i.e., $\lambda_{\min}(\Hb)\geq \lambda_0$ for some constant $\lambda_0>0$.
\end{assumption}
Assumption \ref{asp:ntk_pd} essentially requires the neural tangent kernel matrix $\Hb$ to be non-singular, which is a mild condition and also imposed in other related work \citep{du2019gradient,arora2019exact,cao2019generalization,zhou2019neural}. Moreover, it is shown that Assumption \ref{asp:ntk_pd} can be easily derived from Assumption \ref{asp:nondegen} for two-layer ReLU networks \citep{oymak2020towards,zou2019improved}. Therefore, Assumption \ref{asp:ntk_pd} is mild or even negligible given the non-degeneration assumption on the feature vectors. Also note that matrix $\Hb$ is only defined based on layers $l=1,\ldots,\layerLen$ of the neural network, and does not depend on the output layer $\btheta$. It is easy to extend the definition of $\Hb$ to the NTK matrix defined on all layers including the output layer $\btheta$, which would also be positive definite by Assumption \ref{asp:ntk_pd} and the recursion in \eqref{eq:def_ntk}.

Before we present the regret analysis of the neural contextual bandit, we need to  modify the regret defined in \eqref{def:pseudo_regret} to account for the randomness of the neural network initialization. For a fixed time horizon $T$, we define the regret of Algorithm \ref{alg:deepUCB} as follows.
\begin{align}\label{eq:def_regret}
    R_T=\EE\bigg[\sum_{t=1}^{T}\big( \hat r(\xb_{t, a_t^*}) - \hat r(\xb_{t, a_t}) \big)|\nnweight^{(0)}\bigg],
\end{align}
where the expectation is taken over the randomness of the reward noise. Note that $R_T$ defined in~\eqref{eq:def_regret} is still a random variable since the initialization of Algorithm \ref{alg:update_nn} is randomly generated.

Now we are going to present the regret bound of the proposed algorithm.
\begin{theorem}\label{thm:regret}
Suppose Assumptions \ref{asp:nondegen}, \ref{asp:gradient_fos} and \ref{asp:ntk_pd} hold. Assume that $\|\btheta^*\|_2\leq M$ for some positive constant $M>0$. 
For any $\delta\in(0,1)$, let us choose $\alpha_t$ in $\algname$ as
\begin{align*}
    \alpha_t=\nu\sqrt{2\big(d\log(1+t\log (\bufferLen K)/\lambda )+\log (1/\delta)\big)}+\lambda^{1/2}M.
\end{align*}
We choose the step size $\eta_{\epochNum}$ of Algorithm \ref{alg:update_nn} as
\begin{align*}
    \eta_{\epochNum}\leq C_0\big(d^2 m\nnIter T^{5.5}\layerLen^{6}\log(TK/\delta)\big)^{-1},
\end{align*}
and the width of the neural network satisfies $m=\text{poly}(\layerLen,d,1/\delta,\bufferLen,\log(TK/\delta))$.
With probability at least $1-\delta$ over the randomness of the initialization of the neural network, it holds that
\begin{align*}
    R_T&\leq C_1\alpha_T\sqrt{Td\log\Big(1+ \frac{TG^2}{\lambda d}\Big)}+\frac{C_2\fosPara \layerLen^3 d^{5/2}T\sqrt{\log m\log(\frac{1}{\delta})\log(\frac{TK}{\delta})}\|\rb-\tilde\rb\|_{\Hb^{-1}}}{m^{1/6}},
\end{align*}
where $\{C_i\}_{i=0,1,2}$ are absolute constants  independent of the problem parameters, $\|\rb\|_{\Ab}=\sqrt{\rb^{\top}\Ab\rb}$,
 $\rb=(r(\xb_{1}),r(\xb_{2}),\ldots,r(\xb_{T K}))^{\top}\in\RR^{T K}$ and  $\tilde\rb=(f(\xb_{1};\btheta_{0},\nnweight_0),\ldots,f(\xb_{T K};\btheta_{T-1},\nnweight_{T-1}))^{\top}\in\RR^{T K}$.
\end{theorem}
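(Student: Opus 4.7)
The proof follows the template of LinUCB-style regret analysis, but with two essential twists: (i) the ``features'' $\bphi(\xb;\nnweight_{t-1})$ drift with $t$ because $\nnweight_{t-1}$ is updated by Algorithm~\ref{alg:update_nn}, and (ii) the reward function $r(\cdot)$ is not exactly linear in these features. Accordingly, I would decompose the instantaneous regret into a ``linear-bandit'' part, handled by a self-normalized bound \emph{\`a la} \citet{abbasi2011improved}, plus a ``misspecification'' part controlled by NTK approximation. Throughout, I would work on the high-probability event that the network weights stay inside a small ball $\mathcal{B}(\nnweight^{(0)},\omega)$ around the initialization with $\omega=\mathrm{poly}(T,L,\log(1/\delta))/m^{1/2}$, which can be established by combining the gradient-descent convergence for overparameterized networks (\citet{allen2019convergence,cao2019generalization,zou2019improved}) with the loss $\cL_\epochNum$ in~\eqref{eq:def_loss_nn} and the step-size choice in the theorem.

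\textbf{Step 1: NTK-based linear surrogate for the reward.} Using Assumption~\ref{asp:ntk_pd} and the recursion defining $\Hb$, standard NTK arguments (cf.\ \citet{arora2019exact,cao2019generalization,zhou2019neural}) yield a vector $\btheta^\natural\in\RR^d$ with $\|\btheta^\natural\|_2\lesssim \|\rb-\tilde\rb\|_{\Hb^{-1}}$, as well as an $L^2$-regularized reference parameter, such that for every $\xb\in\{\xb_{i,k}\}$,
\begin{equation*}
 r(\xb) \;=\; \langle \btheta^\natural,\bphi(\xb;\nnweight^{(0)})\rangle \;+\; \varepsilon(\xb),\qquad |\varepsilon(\xb)|\;\leq\;\frac{C\,\fosPara L^3 d^{5/2}\sqrt{\log m}\,\|\rb-\tilde\rb\|_{\Hb^{-1}}}{m^{1/6}},
\end{equation*}
where the residual $\varepsilon(\xb)$ is driven by higher-order Taylor remainders of $\bphi$ around $\nnweight^{(0)}$; this is where Assumption~\ref{asp:gradient_fos} (first-order stability of $\partial\bphi/\partial\nnweight$) enters, since it lets me replace an operator-norm bound on the Jacobian perturbation with the Lipschitz constant $\fosPara$. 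Assumption~\ref{asp:nondegen} ensures $\bphi(\xb;\nnweight^{(0)})=\zero$ is \emph{not} the relevant base point (via the mirrored initialization), so the linear term is nontrivial.

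\textbf{Step 2: Confidence ellipsoid for $\btheta_t$.} Treating $\bphi(\xb_{i,a_i};\nnweight_{i-1})$ as the ``contexts'' and rewriting $\empR_i=\langle\btheta^\natural,\bphi(\xb_{i,a_i};\nnweight_{i-1})\rangle+\xi_i+\Delta_i$, where $\Delta_i$ collects both $\varepsilon(\xb_{i,a_i})$ and the feature drift $\langle\btheta^\natural,\bphi(\xb_{i,a_i};\nnweight_{i-1})-\bphi(\xb_{i,a_i};\nnweight^{(0)})\rangle$, I would apply Theorem~2 of \citet{abbasi2011improved} to the noise sequence $\{\xi_i\}$ and absorb $\{\Delta_i\}$ as a deterministic perturbation. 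This yields, with probability at least $1-\delta$,
\begin{equation*}
 \|\btheta_t-\btheta^\natural\|_{\Ab_t} \;\leq\; \alpha_t \;+\; \sqrt{\textstyle\sum_{i\leq t}\Delta_i^2},
\end{equation*}
so that the standard UCB property $|r(\xb_{t,k})-\langle\btheta_{t-1},\bphi(\xb_{t,k};\nnweight_{t-1})\rangle|\leq \alpha_t\|\bphi(\xb_{t,k};\nnweight_{t-1})\|_{\Ab_{t-1}^{-1}}+\text{(drift)}$ holds uniformly over $k$.

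\textbf{Step 3: Summing up.} By the greedy UCB rule in Algorithm~\ref{alg:deepUCB}, the instantaneous regret $r(\xb_{t,a_t^*})-r(\xb_{t,a_t})$ is bounded by $2\alpha_t\|\bphi(\xb_{t,a_t};\nnweight_{t-1})\|_{\Ab_{t-1}^{-1}}$ plus the drift/misspecification terms carried from Step~1 and Step~2. Cauchy--Schwarz turns $\sum_t\|\bphi(\xb_{t,a_t};\nnweight_{t-1})\|_{\Ab_{t-1}^{-1}}$ into $\sqrt{T\sum_t\|\bphi\|_{\Ab_{t-1}^{-1}}^2}$; the inner sum is bounded via the elliptical potential lemma by $2d\log(1+TG^2/(\lambda d))$, where $G$ is a high-probability upper bound on $\|\bphi(\xb_{t,k};\nnweight_{t-1})\|_2$ (obtained from the NTK-scale and the ball $\mathcal{B}(\nnweight^{(0)},\omega)$). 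This produces the first term $C_1\alpha_T\sqrt{Td\log(1+TG^2/(\lambda d))}$. The aggregated drift and misspecification yield the second term, with the factor $T$ coming from summing $T$ uniformly bounded residuals and the $m^{-1/6}$ rate inherited from Step~1.

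\textbf{Main obstacle.} The delicate point is keeping the ``features'' close enough to their initialization that the LinUCB machinery is applicable, \emph{while} still permitting Algorithm~\ref{alg:update_nn} to decrease $\cL_\epochNum$ meaningfully. Concretely, I need a quantitative guarantee that after $\nnIter$ gradient steps with step size $\eta_\epochNum$ as prescribed, $\|\nnweight_t-\nnweight^{(0)}\|_2$ remains $\mathrm{poly}(T,L)/\sqrt{m}$ uniformly in $t\leq T$, so that both the per-round drift $|\Delta_i|$ and the cumulative drift $\sum_i\Delta_i^2$ are dominated by the second regret term and do not inflate $\alpha_t$. This is the step where the overparameterization width $m=\mathrm{poly}(L,d,1/\delta,H,\log(TK/\delta))$ is consumed and where Assumption~\ref{asp:gradient_fos} is used most heavily; everything else follows the linear-bandit playbook.
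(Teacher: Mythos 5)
There is a genuine gap, and it sits in your Step 1. Under Assumption \ref{asp:nondegen} and the mirrored initialization, $\bphi(\xb;\nnweight^{(0)})=\zero$ for every data point, so the linear part $\la\btheta^\natural,\bphi(\xb;\nnweight^{(0)})\ra$ of your surrogate vanishes identically and the residual $\varepsilon(\xb)$ would have to equal $r(\xb)$ itself — it cannot be $O(m^{-1/6})$. The parenthetical in which you assert that the zero initialization ``is not the relevant base point'' does not repair this: your displayed decomposition uses exactly that base point. More importantly, the correct version of this step (the paper's Lemma \ref{lemma:linearization}) is \emph{not} a small-misspecification statement. It represents $r(\xb_{t,k})$ exactly as $\btheta^{*\top}\bphi(\xb_{t,k};\nnweight_{t-1})$ plus the NTK correction $\btheta_0^{\top}\gb(\xb_{t,k};\nnweight^{(0)})(\nnweight^*-\nnweight^{(0)})$, and that correction is $O(\sqrt{d\layerLen}\,\|\rb-\tilde\rb\|_{\Hb^{-1}})$ — order one in $m$, since $\|\gb\|_F=O(\sqrt{d\layerLen m})$ cancels $\|\nnweight^*-\nnweight^{(0)}\|_2=O(m^{-1/2}\|\rb-\tilde\rb\|_{\Hb^{-1}})$. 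Because the trained features stay near zero, this correction term carries essentially all of the reward; it is the opposite of a vanishing remainder.

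This breaks your Step 2 as well: feeding an $O(1)$ per-round perturbation $\Delta_i$ into the confidence radius as $\sqrt{\sum_{i\le t}\Delta_i^2}$ inflates the ellipsoid by $\Theta(\sqrt{t})$ and the final regret by $\Theta(T\sqrt{d\log T})$, i.e., linear regret. The paper avoids this by keeping the bias structured: the least-squares estimate satisfies $\|\btheta_t-\btheta^*-\Mb_t\|_{\Ab_t}\le\alpha_t$ with an explicit bias vector $\Mb_t=\Ab_t^{-1}\sum_s\bphi(\xb_{s,a_s};\nnweight_{s-1})\,\btheta_0^{\top}\gb(\xb_{s,a_s};\nnweight^{(0)})(\nnweight^*-\nnweight^{(0)})$ (Lemma \ref{lemma:confidence_bound}), whose $\ell_2$ norm is bounded by $2Ud$ \emph{uniformly in $t$} via the dedicated Lemma \ref{lemma:A_inverse_sum_phi}. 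The $m^{-1/6}$ rate in the final bound then comes not from the bias being small but from the fact that it multiplies only \emph{differences} across arms: $\gb(\xb_{t,a_t^*};\nnweight^{(0)})-\gb(\xb_{t,a_t};\nnweight^{(0)})$, controlled by the first-order stability Assumption \ref{asp:gradient_fos}, and $\bphi(\xb_{t,a_t^*};\nnweight_{t-1})-\bphi(\xb_{t,a_t};\nnweight_{t-1})$, controlled by the local linearization Lemma \ref{lemma:local_linear} with radius $\omega=O(m^{-1/2}\|\rb-\tilde\rb\|_{\Hb^{-1}})$ so that $\omega^{4/3}\sqrt{m\log m}=O(m^{-1/6})$. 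Your Step 3 (Cauchy--Schwarz plus the elliptical potential lemma) and your concern about keeping $\|\nnweight_t-\nnweight^{(0)}\|_2$ small match the paper, but without the exact linearization, the tracked bias $\Mb_t$, and the arm-difference argument, the proposal does not yield a sublinear bound.
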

\begin{remark}
Theorem \ref{thm:regret} shows that the regret of Algorithm \ref{alg:deepUCB} can be bounded by two parts: the first part is of order $\tilde O(\sqrt{T})$, which resembles the regret bound of linear contextual bandits \citep{abbasi2011improved}; the second part is of order $\tilde O(m^{-1/6}T\sqrt{(\rb-\tilde\rb)^{\top}\Hb^{-1}(\rb-\tilde\rb)})$, which depends on the estimation error of the neural network $f$ for the reward generating function $r$ and the neural tangent kernel $\Hb$.
\end{remark}
\begin{remark}
For the ease of presentation, let us denote $\cE:=\|\rb-\tilde\rb\|_{\Hb^{-1}}$. If we have  $\cE=O(1)$, the total regret in Theorem \ref{thm:regret} becomes $\tilde O(m^{-1/6}T)$. If we further choose a sufficiently overparameterized neural network with $m\geq T^3$, then the regret reduces to $\tilde O(\sqrt{T})$ which matches the regret of linear contextual bandits \citep{abbasi2011improved}. 
We remark that there is a similar  assumption in \cite{zhou2019neural} where they assume that $\rb^{\top}\Hb^{-1}\rb$ can be upper bounded by a constant. They show that this term can be bounded by the RKHS norm of $\rb$ if it belongs to the RKHS induced by the neural
tangent kernel \citep{arora2019fine,arora2019exact,lee2019wide}.
In addition, $\cE$ here is the difference between the true reward function and the neural network function, which can also be small if the deep neural network function well approximates the reward generating function $r(\cdot)$. 
\end{remark}

\section{Experiments}
In this section, we provide empirical evaluations of the proposed $\algname$ algorithm. As we have discussed in Section \ref{sec:algorithm}, $\algname$ could be viewed as an instantiation of the Neural-Linear scheme studied in \cite{riquelme2018deep} except that we use the UCB exploration instead of the posterior sampling exploration therein. Note that there has been an extensive comparison \citep{riquelme2018deep} of the Neural-Linear methods with many other baselines such as greedy algorithms, Variational Inference, Expectation-Propagation, Bayesian Non-parametrics and so on. Therefore, we do not seek a thorough empirical comparison of $\algname$ with all existing bandits algorithms. We refer readers who are interested in the performance of Neural-Linear methods with deep representation and shallow exploration compared with a vast of baselines in the literature to the benchmark study by \citet{riquelme2018deep}. In this experiment, we only aim to show the advantages of our algorithm over the following baselines: (1) Neural-Linear \citep{riquelme2018deep}; (2) LinUCB \citep{chu2011contextual}, which does not have a \textit{deep representation} of the feature vectors; and (3) NeuralUCB \citep{zhou2019neural}, which performs UCB exploration on all the parameters of the neural network instead of the \textit{shallow exploration} used in our paper.


\noindent\textbf{Datasets:} we evaluate the performances of all algorithms on bandit problems created from real-world data. Specifically, we use datasets \emph{(Shuttle) Statlog}, \emph{Magic} and \emph{Covertype}   from UCI machine learning repository \citep{dua2019uci}, of which the details are presented in Table \ref{table:dataset}. In Table \ref{table:dataset}, each instance represents a feature vector $\xb\in\RR^{d}$ that is associated with one of the $K$ arms, and dimension $d$ is the number of attributes in each instance.

\begin{table}[ht]
    \centering
    \caption{Specifications of  datasets from the UCI machine learning repository used in this paper.     \label{table:dataset}}
    \begin{tabular}{lccc}
    \toprule
    &\emph{ Statlog} & \emph{Magic} &\emph{Covertype}\\
    \midrule
    Number of attributes     & 9& 11 & 54\\
    Number of arms & 7& 2& 7\\
    Number of instances     & 58,000& 19,020& 581,012 \\    
    \bottomrule
    \end{tabular}
\end{table}

\begin{figure*}[h]
\subfigure[Statlog]{\label{fig:statlog}\includegraphics[height=0.24\textwidth]{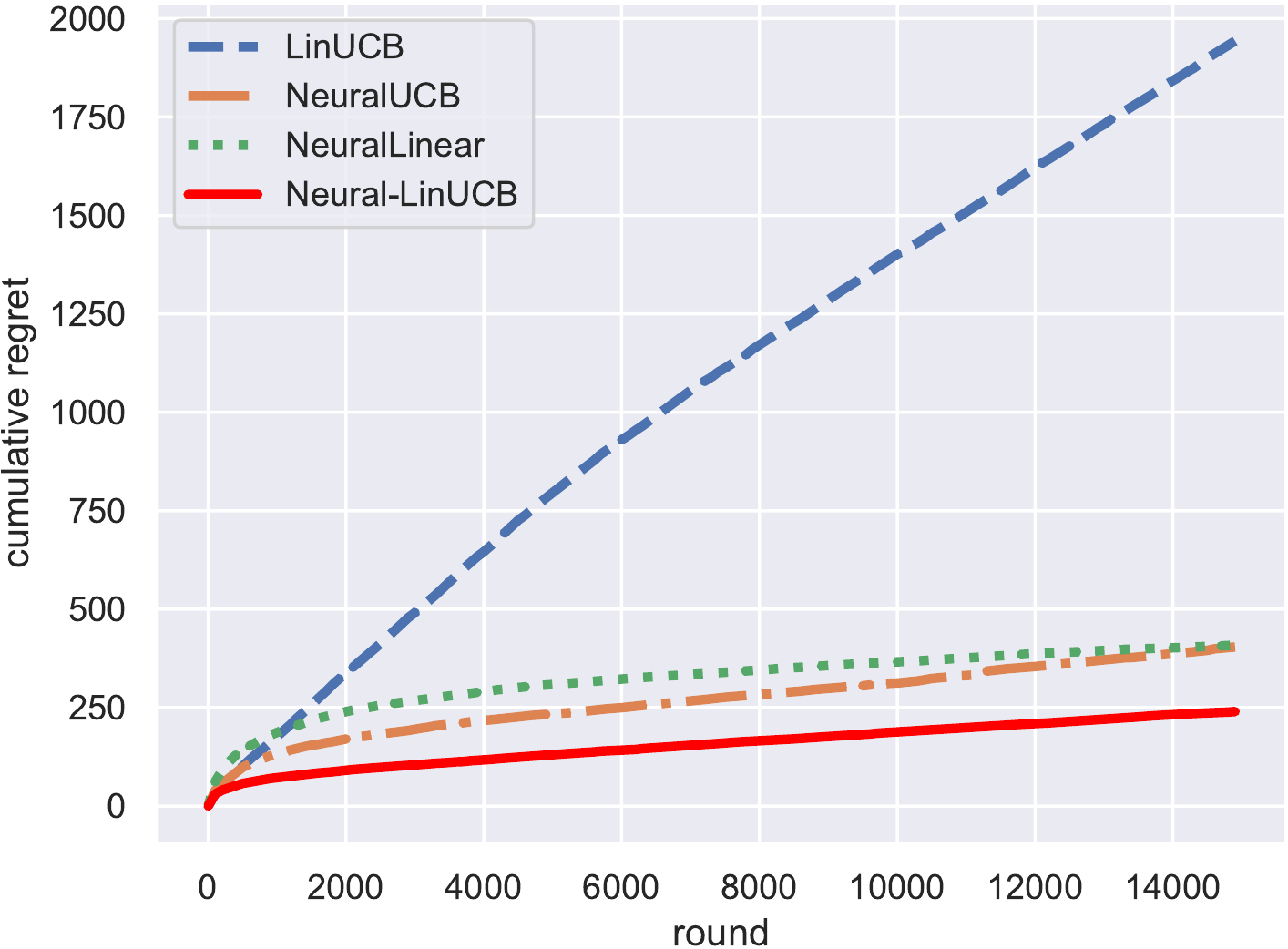}}
\subfigure[Magic]{\label{fig:magic}\includegraphics[height=0.24\textwidth]{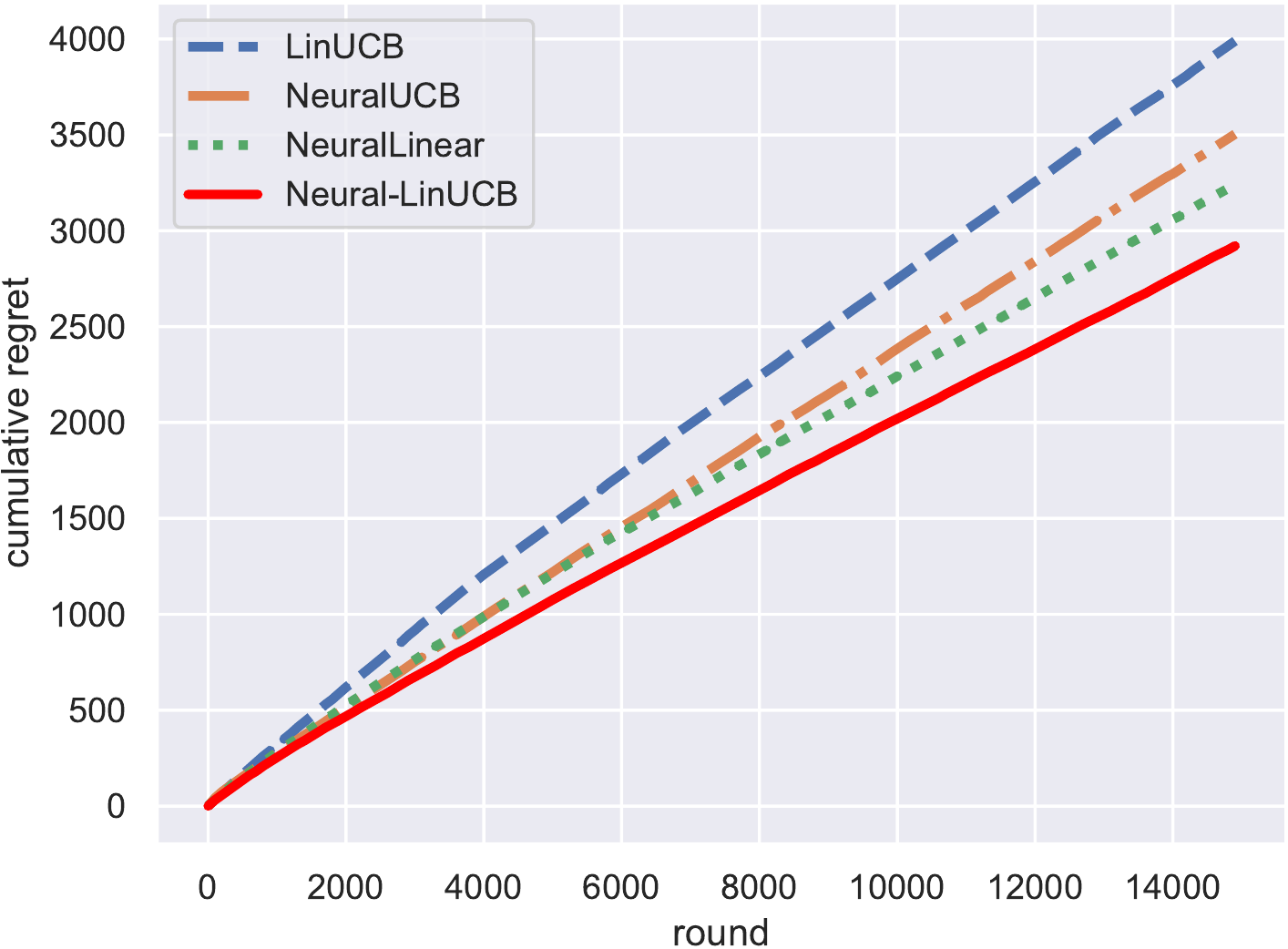}}
\subfigure[Covertype]{\label{fig:covertype}\includegraphics[height=0.24\textwidth]{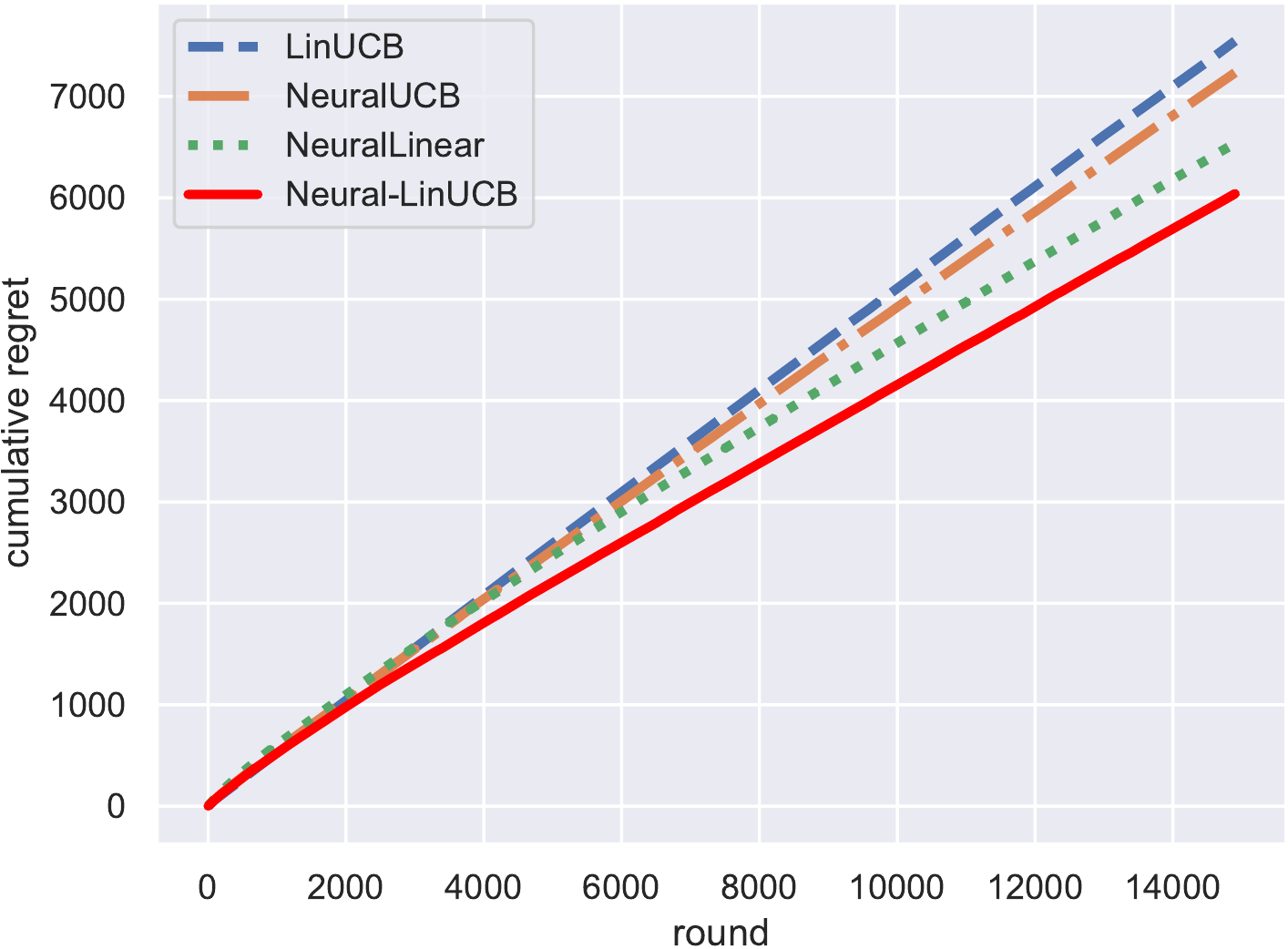}}
\caption{The cumulative regrets of LinUCB, NeuralUCB, Neural-Linear and $\algname$ over $15,000$ rounds. Experiments are averaged over 10 repetitions. 
} 
\label{fig:k10}
\end{figure*}

\noindent\textbf{Implementations:} for LinUCB, we follow the setting in \citet{li2010contextual} to use disjoint models for different arms. For NeuralUCB, Neural-Linear and $\algname$, we use a ReLU neural network defined as in \eqref{eq:def_network}, where we set the width $m=2000$ and the hidden layer length $\layerLen=2$. We set the time horizon $T=15,000$, which is the total number of rounds for each algorithm on each dataset. We use gradient decent to optimize the network weights, with a step size $\eta_{\epochNum}=$1e-5 and maximum iteration number $n=1,000$. To speed up the training process, the network parameter $\nnweight$ is updated every $\bufferLen=100$ rounds. We also apply early stopping when the loss difference of two consecutive iterations is smaller than a threshold of 1e-6. We set $\lambda=1$ and $\alpha_t=0.02$ for all algorithms, $t\in[T]$.
Following the setting in \citet{riquelme2018deep}, we use round-robin to independently select each arm for $3$ times at the beginning of each algorithm. For NeuralUCB, since it is computationally unaffordable to perform the original UCB exploration as displayed in \cite{zhou2019neural}, we follow their experimental setting to replace the matrix $\Zb_t\in\RR^{(d+p)\times (d+p)}$ in \citet{zhou2019neural} with its diagonal matrix.  


\noindent\textbf{Results:} we plot the cumulative regret of all algorithms versus round in Figures \ref{fig:statlog}, \ref{fig:magic} and \ref{fig:covertype}. The results are reported based on the average of 10 repetitions over different random shuffles of the datasets. It can be seen that  algorithms based on neural network representations (NeuralUCB, Neural-Linear and $\algname$) consistently outperform the linear contextual bandit method LinUCB, which shows that linear models may lack representation power and find biased estimates for the underlying reward generating function. Furthermore,  our proposed $\algname$  consistently achieves a significantly lower cumulative regret than the NeuralUCB algorithm. The possible reasons for this improvement are two-fold: (1) the replacement of the feature matrix with its diagonal matrix in NeuralUCB causes the UCB exploration to be biased and not accurate enough; (2) the exploration over the whole weight space of a deep neural network may potentially lead to spurious local minima that generalize  poorly to unseen data. In addition, our algorithm also achieves a lower regret than Neural-Linear on the tested datasets. 

The results in our experiment are well aligned with our theory that deep representation and shallow exploration are sufficient to guarantee a good performance of neural contextual bandit algorithms, which is also consistent with the findings in existing literature \citep{riquelme2018deep} that decoupling the representation learning and uncertainty estimation improves the performance. Moreover, we also find that our $\algname$ algorithm is much more computationally efficient than NeuralUCB since we only perform the UCB exploration on the last layer of the neural network. In specific, on the \emph{Statlog} dataset, it  takes $19,028$ seconds for NeuralUCB to finish $15,000$ rounds and achieve the regret in Figure \ref{fig:statlog}, while it only takes $783$ seconds for $\algname$. On the \emph{Magic} dataset, the runtimes of NeuralUCB and $\algname$ are $18,579$ seconds and $7,263$ seconds respectively. And on the \emph{Covertype} dataset, the runtimes of NeuralUCB and $\algname$ are $11,941$ seconds and $3,443$ seconds respectively. For practical applications in the real-world with larger problem sizes, we believe that the improvement of our algorithm in terms of the computational efficiency will be more significant.




\section{Conclusions}
In this paper, we propose a new neural contextual bandit algorithm called $\algname$, which uses the hidden layers of a ReLU neural network as a deep representation of the raw feature vectors and performs UCB type exploration on the last layer of the neural network. By incorporating techniques in liner contextual bandits and neural tangent kernels, we prove that the proposed algorithm achieves a sublinear regret when the width of the network is sufficiently large. This is the first regret analysis of neural contextual bandit algorithms with deep representation and shallow exploration, which have been observed in practice to work well on many benchmark bandit problems \citep{riquelme2018deep}. We also conducted experiments on real-world datasets to demonstrate the advantage of the proposed algorithm over linear contextual bandits and existing neural contextual bandit algorithms.

\appendix

\section{Proof of the Main Theory}\label{sec:proof_main}
In this section, we provide the proof of the regret bound for $\algname$. Recall that in neural contextual bandits, we do not assume a specific formulation of the underlying reward generating function $r(\cdot)$. Instead, we use deep neural networks defined in Section \ref{sec:neural_bandit} to approximate $r(\cdot)$. We will first show that the reward generating function $r(\cdot)$ can be approximated by the local linearization of the overparameterized neural network near the initialization weight $\nnweight^{(0)}$. In particular, we denote the gradient of $\bphi(\xb;\nnweight)$ with respect to $\nnweight$ by $\gb(\xb;\nnweight)$, namely,
\begin{align}\label{eq:def_grad_phi}
    \gb(\xb;\nnweight)=\nabla_{\nnweight} \bphi(\xb;\nnweight),
\end{align}
which is a matrix in $\RR^{d\times p}$. We define $\phi_j(\xb;\nnweight)$ to be the $j$-th entry of vector $\bphi(\xb;\nnweight)$, for any $j\in[d]$. Then, we can prove the following lemma.
\begin{lemma}\label{lemma:linearization}
Suppose Assumptions \ref{asp:ntk_pd} hold. Then there exists  $\nnweight^*\in\RR^{p}$ such that $\|\nnweight^*-\nnweight^{(0)}\|_2\leq 1/\sqrt{ m}\sqrt{(\rb-\tilde\rb)^{\top}\Hb^{-1}(\rb-\tilde\rb)}$ and it holds that
\begin{align*}
    r(\xb_{t,k})
    &=\btheta^{*\top}\bphi(\xb_{t,k};\nnweight_{t-1})+\btheta_0^{\top}\gb(\xb_{t,k};\nnweight^{(0)})\big(\nnweight^{*}-\nnweight^{(0)}\big),
\end{align*}
for all $k\in[K]$ and $t=1,\ldots,T$. 
\end{lemma}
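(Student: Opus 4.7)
The plan is to exploit the NTK-style linearization of the hidden-layer feature map at initialization, together with the positive definiteness of $\Hb$ (Assumption \ref{asp:ntk_pd}), to exhibit $\nnweight^*$ as the minimum-norm solution of a suitably defined underdetermined linear system.

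First, I would record the consequence of Assumption \ref{asp:nondegen} combined with the symmetric initialization in Algorithm \ref{alg:deepUCB}, namely $\bphi(\xb_{t,k};\nnweight^{(0)})=\zero$ for every context-action pair. Consequently, when the scalar output $f(\xb;\btheta,\nnweight)=\btheta^{\top}\bphi(\xb;\nnweight)$ is linearized jointly in $(\btheta,\nnweight)$ around the initial point and evaluated at a training datum, both the zeroth-order term $\btheta_0^{\top}\bphi(\xb;\nnweight^{(0)})$ and the last-layer gradient contribution $\bphi(\xb;\nnweight^{(0)})^{\top}(\btheta-\btheta_0)$ vanish, leaving only $\btheta_0^{\top}\gb(\xb;\nnweight^{(0)})(\nnweight-\nnweight^{(0)})$. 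Thus the identity claimed in the lemma is exactly what one gets by asking this NTK linearization to interpolate the residual $r(\xb_{t,k})-\btheta^{*\top}\bphi(\xb_{t,k};\nnweight_{t-1})$ at each training point.

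Second, I would set up the linear system explicitly. Let $G\in\RR^{TK\times p}$ be the matrix whose $i$-th row is $\btheta_0^{\top}\gb(\xb_i;\nnweight^{(0)})$, and collect the residuals $r(\xb_i)-\btheta^{*\top}\bphi(\xb_i;\nnweight_{t_i-1})$ into a vector in $\RR^{TK}$; by the paper's definition of $\tilde\rb$ this vector is precisely $\rb-\tilde\rb$. Because $p\gg TK$ in the overparameterized regime, the system $G\Delta=\rb-\tilde\rb$ with $\Delta:=\nnweight^*-\nnweight^{(0)}$ is massively underdetermined, so I take the minimum-norm solution $\Delta=G^{\top}(GG^{\top})^{-1}(\rb-\tilde\rb)$, which satisfies $\|\Delta\|_2^2=(\rb-\tilde\rb)^{\top}(GG^{\top})^{-1}(\rb-\tilde\rb)$. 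It then remains to bound $(GG^{\top})^{-1}$ spectrally by $m^{-1}\Hb^{-1}$ up to a constant.

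Third, I would establish the spectral lower bound $GG^{\top}\succeq (m/C)\Hb$. At initialization, each entry of $GG^{\top}$ equals $\btheta_0^{\top}\gb(\xb_i;\nnweight^{(0)})\gb(\xb_j;\nnweight^{(0)})^{\top}\btheta_0$, and a direct computation using the Gaussian initializations of $\btheta_0$ and $\nnweight^{(0)}$ gives $\mathbb{E}[(GG^{\top})_{ij}]$ proportional to $m\Hb_{ij}$. Matrix-concentration arguments in the spirit of \citet{cao2019generalization,arora2019exact,zhou2019neural}, combined with the width condition $m=\mathrm{poly}(\layerLen,d,1/\delta,\bufferLen,\log(TK/\delta))$ from Theorem \ref{thm:regret}, yield uniform concentration across all $(i,j)\in[TK]^2$ and the desired operator-norm lower bound with probability at least $1-\delta$. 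Combined with Assumption \ref{asp:ntk_pd}, this gives $(GG^{\top})^{-1}\preceq(C/m)\Hb^{-1}$ and hence the claimed bound $\|\nnweight^*-\nnweight^{(0)}\|_2\leq m^{-1/2}\|\rb-\tilde\rb\|_{\Hb^{-1}}$ after absorbing constants into the width requirement. The main obstacle is precisely this spectral concentration: one must control not only the convergence of the hidden-layer Jacobian Gram matrix to the infinite-width NTK $\Hb$, but also the additional randomness injected through the output-layer initialization $\btheta_0$, and verify it uniformly over all $TK$ training feature vectors. This is standard in the overparameterized literature but delicate in its constants, and is what dictates the polynomial width requirement.
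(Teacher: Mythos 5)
Your construction is essentially the paper's: you solve the underdetermined system $G\Delta=\rb-\tilde\rb$ with $G$ the $\btheta_0$-weighted hidden-layer Jacobian and bound $\|\Delta\|_2$ via a spectral lower bound on the Gram matrix $GG^{\top}$ in terms of $m\Hb$; the paper does the same thing, only packaged through an SVD $\bPsi=\Pb\Db\Qb^{\top}$ of the \emph{full} $(d+p)$-dimensional gradient matrix and an explicit right-inverse $\Qb_2(\Qb_2^{\top}\Qb_2)^{-1}\Db^{-1}\Pb^{\top}$ in place of your Moore--Penrose pseudoinverse. The one place you diverge is the step you flag as the main obstacle: you propose to establish $GG^{\top}\succeq (m/C)\Hb$ from scratch by computing $\EE_{\btheta_0}[(GG^{\top})_{ij}]$ and running a matrix-concentration argument over both $\btheta_0$ and $\nnweight^{(0)}$, whereas the paper simply cites the Gram-to-NTK concentration of \citet{arora2019exact} (Lemma \ref{lemma:NTK_gradient}) for the full scalar-network gradient, obtaining $\|\bPsi\bPsi^{\top}-\Hb_{\layerLen+1}\|_F\leq TK\epsilon$ with $\epsilon=\lambda_0/(2TK)$ and then using $\Hb_{\layerLen+1}=\Hb+\Bb$ with $\Bb\succeq 0$. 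Your own first observation closes this gap for free: since $\bphi(\xb_{i};\nnweight^{(0)})=\zero$ under Assumption \ref{asp:nondegen} and the symmetric initialization, the $\bphi$-block of the full gradient vanishes, so $\frac{1}{m}GG^{\top}$ \emph{is} the full-gradient Gram matrix $\bPsi\bPsi^{\top}$, and the off-the-shelf concentration applies directly --- no new expectation computation or union-bounded concentration for the $\btheta_0$ randomness is needed. Two small cautions: the minimum-norm property of your $\Delta$ is not actually needed (any right-inverse with the stated norm bound suffices, which is all the paper exhibits), and your identification of the residual vector $\big(r(\xb_i)-\btheta^{*\top}\bphi(\xb_i;\nnweight_{t_i-1})\big)_i$ with $\rb-\tilde\rb$ inherits a mismatch already present in the paper, whose definition of $\tilde\rb$ uses $\btheta_{t-1}$ rather than $\btheta^{*}$ in the last layer.
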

Lemma \ref{lemma:linearization} implies that the reward generating function $r(\cdot)$ at points $\{\xb_{i,k}\}_{i\in[T],k\in[K]}$ can be approximated by a linear function around the initial point $\nnweight^{(0)}$. Note that a similar lemma is also proved in \citet{zhou2019neural} for NeuralUCB.

The next lemma shows the upper bounds of the output of the neural network $\bphi$ and its gradient.
\begin{lemma}\label{lemma:gradient_function_bound}
Suppose Assumptions \ref{asp:nondegen} and \ref{asp:ntk_pd} hold. For any round index $t\in[T]$, suppose it is in the $\epochNum$-th epoch of Algorithm \ref{alg:update_nn}, i.e.,   $t=(\epochNum-1)\bufferLen+i$ for some $i\in[\bufferLen]$. If the step size $\eta_{\epochNum}$ in Algorithm \ref{alg:update_nn}  satisfies 
\begin{align*}
    \eta \leq \frac{C_{0}}{d^2  m\nnIter T^{5.5}\layerLen^{6}\log(TK/\delta)},
\end{align*}
and the width of the neural network satisfies
\begin{align}
    m\geq\max\{\layerLen\log(TK/\delta), d\layerLen^2\log(m/\delta),\delta^{-6}\bufferLen^{18}\layerLen^{16}\log^3(TK)\},
\end{align}
then, with probability at least $1-\delta$ we have
\begin{align*}
    \|\nnweight_t-\nnweight^{(0)}\|_{2}&\leq \frac{\delta^{3/2}}{m^{1/2}Tn^{9/2}\layerLen^6\log^3(m)},\\
    \|\gb(\xb_{t,k};\nnweight^{(0)})\|_F&\leq C_{1}\sqrt{d\layerLen m},\\
    \|\bphi(\xb;\nnweight_t)\|_2&\leq \sqrt{d\log(\nnIter)\log(TK/\delta)},
\end{align*}
for all $t\in[T]$, $k\in[K]$, where the neural network $\bphi$ is defined in \eqref{eq:def_hidden_layer} and its gradient is defined in~\eqref{eq:def_grad_phi}.
\end{lemma}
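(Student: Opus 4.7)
The plan is to establish the three bounds in dependency order, leveraging the overparameterized-neural-network-via-NTK toolkit (Allen-Zhu-Li-Song, Du et al., Cao-Gu, Zou-Gu). Throughout, the random Gaussian initialization provides the needed concentration, and Assumption \ref{asp:ntk_pd} guarantees a benign loss landscape on a neighborhood of $\nnweight^{(0)}$ whose relative size to the ambient scale shrinks as $m$ grows.

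First, for the gradient bound $\|\gb(\xb_{t,k};\nnweight^{(0)})\|_F \leq C_{1}\sqrt{dLm}$: this is a pure initialization statement. Differentiating \eqref{eq:def_hidden_layer} with respect to $\Wb_l$ produces a product of ReLU-diagonal masks and subsequent random weight matrices, while the explicit $\sqrt{m}$ in \eqref{eq:def_hidden_layer} gives a single $\sqrt{m}$ factor after the chain rule. Standard Gaussian concentration (as in Allen-Zhu et al.\ or Cao-Gu) shows each per-layer factor has operator norm $O(1)$ with high probability once $m \geq L\log(TK/\delta)$. A union bound over the $TK$ feature vectors, the $L$ layers, and the $d$ output coordinates of $\bphi$ yields the claimed Frobenius bound.

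Second, for the weight-distance bound, I would analyze the gradient descent iterates of Algorithm \ref{alg:update_nn} on the quadratic loss $\cL_{\epochNum}$ using a coupled induction. On a ball of suitable radius $R$ around $\nnweight^{(0)}$, (i) $\cL_{\epochNum}$ is $O(Lm)$-smooth by the bound above, and (ii) the local Jacobian has least singular value at least $\sqrt{m\lambda_0}/2$ by Assumption \ref{asp:ntk_pd} together with Gaussian concentration, yielding a local PL-type inequality. By Assumption \ref{asp:nondegen}, $\bphi(\xb_{i,a_i};\nnweight^{(0)}) = \zero$, so $\cL_{\epochNum}(\nnweight^{(0)}) = \sum_{i=1}^{qH}\empR_i^2 \leq qH \leq T$. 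The standard interlocking induction (loss decrease keeps the iterate in the ball; staying in the ball preserves PL; PL gives linear loss decrease) with the prescribed $\eta_{\epochNum}$ drives $\cL_{\epochNum}$ below any target after $n$ iterations, and integrating through the PL inequality produces $\|\nnweight_t - \nnweight^{(0)}\|_2 \leq O\bigl(\sqrt{\cL_{\epochNum}(\nnweight^{(0)})/(m\lambda_0)}\bigr)$. The polynomial factors $T,n,L,\delta^{-1}$ in the stated bound are then achieved by the extreme overparameterization $m \geq \delta^{-6}\bufferLen^{18}L^{16}\log^3(TK)$, which suppresses the residual by any needed polynomial factor.

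Third, for $\|\bphi(\xb;\nnweight_t)\|_2 \leq \sqrt{d\log(\nnIter)\log(TK/\delta)}$: Assumption \ref{asp:nondegen} (paired coordinates combined with the block-symmetric initialization) gives $\bphi(\xb;\nnweight^{(0)}) = \zero$, so the target reduces to bounding the change $\bphi(\xb;\nnweight_t) - \bphi(\xb;\nnweight^{(0)})$. I would apply the NTK-regime linearization,
\begin{equation*}
\bphi(\xb;\nnweight_t) - \bphi(\xb;\nnweight^{(0)}) = \gb(\xb;\nnweight^{(0)})\bigl(\nnweight_t - \nnweight^{(0)}\bigr) + \text{remainder},
\end{equation*}
where the remainder is controlled by the standard activation-pattern-stability lemma (a polynomial in $L$ times $\sqrt{m}\|\nnweight_t - \nnweight^{(0)}\|_2^{4/3}$). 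Combining the first two bounds controls the leading term, and the aggressive $m$ makes the remainder negligible, so the right-hand side is much smaller than $\sqrt{d\log(\nnIter)\log(TK/\delta)}$.

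The main obstacle is the second step: keeping the gradient descent iterates inside the NTK-regime ball (where the Jacobian remains close to its value at $\nnweight^{(0)}$ and the PL inequality holds) while simultaneously driving $\|\nnweight_t - \nnweight^{(0)}\|_2$ below a tight polynomial threshold in $T,n,L,\delta^{-1}$. This requires a coupled induction over the $n$ gradient steps, an explicit accounting of high-probability events to union-bound over $T$ rounds and $K$ arms, and a careful choice of width to dominate every polynomial factor that appears, which is the precise reason behind the stated width condition.
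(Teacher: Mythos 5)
There is a genuine gap in your second step, and it is the crux of the lemma. You propose to control $\|\nnweight_t-\nnweight^{(0)}\|_2$ via a local PL inequality with PL constant of order $m\lambda_0$, which after integrating along the trajectory gives a displacement of order $\sqrt{\cL_{\epochNum}(\nnweight^{(0)})/(m\lambda_0)}\approx\sqrt{T/(m\lambda_0)}$. This bound and the target bound $\delta^{3/2}/(m^{1/2}Tn^{9/2}\layerLen^6\log^3 m)$ \emph{both} scale as $m^{-1/2}$, so the claim that "the aggressive $m$ suppresses the residual by any needed polynomial factor" is false here: the ratio between what your argument yields and what the lemma asserts is of order $T^{3/2}\nnIter^{9/2}\layerLen^6\log^3(m)/\delta^{3/2}$, which does not shrink as $m\to\infty$ (the $\log^3 m$ factor actually grows). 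Moreover, with the prescribed step size $\eta_{\epochNum}\lesssim (d^2 m\nnIter T^{5.5}\layerLen^6\log(TK/\delta))^{-1}$ the product $\eta\mu\nnIter\approx\lambda_0/(d^2T^{5.5}\layerLen^6\log(TK/\delta))$ is far below $1$, so $\nnIter$ gradient steps cannot "drive $\cL_{\epochNum}$ below any target": no meaningful linear loss decrease occurs within an epoch, and your interlocking induction has nothing to interlock with.

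The paper's proof sidesteps convergence entirely. It runs a joint induction over the inner GD iterates $s$ maintaining (i) $\|\nnweight^{(s)}_{\epochNum}-\nnweight^{(0)}\|_2\leq\omega$ with $\omega=\delta^{3/2}m^{-1/2}T^{-9/2}\layerLen^{-6}\log^{-3}(m)$ and (ii) a per-coordinate bound $|\phi_j(\xb;\nnweight^{(s)})|\lesssim\sqrt{\log(TK/\delta)}\sum_{h=0}^{s}(h+1)^{-1}$. From (ii) and $\|\btheta_i\|_2\leq 2d$ (Lemma \ref{lemma:A_inverse_sum_phi}) one gets $\cL(\nnweight^{(j)})\lesssim Td^3\log(TK/\delta)\log^2\nnIter$, hence via Lemma \ref{lemma:gradient_upper_bound} a uniform gradient bound $\|\nabla\cL(\nnweight^{(j)})\|_2\lesssim d^2T\log(\nnIter)\sqrt{m\log(TK/\delta)}$; the displacement is then bounded crudely by $\sum_{j\leq s}\eta\|\nabla\cL(\nnweight^{(j)})\|_2\leq\nnIter\eta\cdot d^2T\log(\nnIter)\sqrt{m\log(TK/\delta)}$, and the step-size condition is engineered precisely so this product is at most $\omega$. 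In other words, the lemma is a "tiny step size times bounded gradient times $\nnIter$ steps" statement, not a convergence statement, and the third bound on $\|\bphi(\xb;\nnweight_t)\|_2$ falls out of the harmonic sum in (ii) rather than from the linearization you propose (though your route to the third bound would work once the correct $\omega$ is in hand). Your first step, the initialization gradient bound, is fine and matches the paper's use of Lemma \ref{lemma:bounded_gradient}.
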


The next lemma shows that the neural network $\bphi(\xb;\nnweight)$ is close to a linear function in terms of the weight $\nnweight$ parameter around a small neighborhood of the initialization point $\nnweight^{(0)}$.
\begin{lemma}[Theorems 5 in \cite{cao2019generalization2}]\label{lemma:local_linear}
Let $\nnweight,\nnweight'$ be  in the neighborhood of $\nnweight_0$, i.e., $\nnweight,\nnweight'\in\BB(\nnweight_0,\omega)$ for some $\omega>0$. Consider the neural network defined in \eqref{eq:def_hidden_layer}, if the width $m$ and the radius $\omega$ of the neighborhood satisfy
\begin{align*}
    m&\geq C_0\max\{d\layerLen^2\log(m/\delta),\omega^{-4/3}\layerLen^{-8/3}\log(TK)\log(m/(\omega\delta))\},\\
    \omega&\leq C_1\layerLen^{-5}(\log m)^{-3/2},
\end{align*}
then for all $\xb\in\{\xb_{t,k}\}_{t\in[T],k\in[K]}$, with probability at least $1-\delta$ it holds that
\begin{align*}
    &|\phi_j(\xb;\nnweight)-\hat \phi_j(\xb;\nnweight)|
    \leq C_2\omega^{4/3}L^{3}d^{-1/2}\sqrt{m\log m},
\end{align*}
where $\hat\phi_j(\xb;\nnweight)$ is the linearization of $\phi_j(\xb;\nnweight)$ at $\nnweight'$ defined as follow:
\begin{align}\label{eq:def_linear_approx_phi}
    \hat\phi_j(\xb;\nnweight)=\phi_j(\xb;\nnweight')+\la\nabla_{\nnweight}\phi_j(\xb;\nnweight'),\nnweight-\nnweight'\ra.
\end{align}
\end{lemma}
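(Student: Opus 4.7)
The plan is to reduce the linearization error to a uniform bound on the variation of the Jacobian $\nabla_{\nnweight}\phi_j$ within the ball $\BB(\nnweight_0,\omega)$. By the fundamental theorem of calculus along the segment $\nnweight(t) := \nnweight' + t(\nnweight - \nnweight')$, $t\in[0,1]$,
\begin{equation*}
\phi_j(\xb;\nnweight) - \hat\phi_j(\xb;\nnweight) = \int_0^1 \bigl\langle \nabla_{\nnweight}\phi_j(\xb;\nnweight(t)) - \nabla_{\nnweight}\phi_j(\xb;\nnweight'),\; \nnweight - \nnweight'\bigr\rangle\, dt.
\end{equation*}
Cauchy--Schwarz, together with $\|\nnweight - \nnweight'\|_2 \le 2\omega$, reduces the problem to bounding $\sup_{\tilde\nnweight\in\BB(\nnweight_0,\omega)} \|\nabla_{\nnweight}\phi_j(\xb;\tilde\nnweight) - \nabla_{\nnweight}\phi_j(\xb;\nnweight_0)\|_2$, from which the variation about $\nnweight'$ follows via the triangle inequality through $\nnweight_0$.

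Next I would unroll $\phi_j$ layer by layer using activation diagonal matrices $\Db_\ell(\nnweight)$ with entries $\mathbf{1}\{(\Wb_\ell\alpha_{\ell-1}(\xb;\nnweight))_k \geq 0\}$, so that the Jacobian block with respect to $\Wb_\ell$ factorizes into a product of the form $(\prod_{\ell'>\ell} \Wb_{\ell'}\Db_{\ell'}(\nnweight))\,\Db_\ell(\nnweight)\,\alpha_{\ell-1}(\xb;\nnweight)^{\top}$. The difference of two such Jacobians at $\tilde\nnweight$ and $\nnweight_0$ then expands telescopically across layers into a sum of $O(\layerLen)$ terms, each of which is the product of three controllable factors: (i) spectral norms of weight matrices $\Wb_{\ell'}$, which remain $O(\sqrt{\layerLen})$ at Gaussian initialization and are preserved up to constants under $\omega$-perturbations; (ii) intermediate activations $\alpha_\ell(\xb;\tilde\nnweight)$, whose $\ell_2$-norms concentrate around a constant by the initialization scheme combined with Assumption~\ref{asp:nondegen}; and (iii) the key diagonal differences $\Db_\ell(\tilde\nnweight) - \Db_\ell(\nnweight_0)$, whose number of nonzero entries counts the neurons whose pre-activation sign has flipped.

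The crucial ingredient is the \emph{sparse sign-change} property for overparameterized ReLU networks: with probability at least $1-\delta$ over the initialization, for every $\tilde\nnweight\in\BB(\nnweight_0,\omega)$, every layer $\ell\in[\layerLen]$, and every $\xb\in\{\xb_{t,k}\}_{t\in[T],k\in[K]}$, the number of neurons in layer $\ell$ whose pre-activation sign differs between $\tilde\nnweight$ and $\nnweight_0$ is at most $\tilde O(m\omega^{2/3}\layerLen)$. This bound follows from anticoncentration of the Gaussian pre-activations at initialization: each pre-activation is approximately $N(0,1)$, so only an $O(\omega^{2/3})$ fraction can lie in the vulnerable window $[-\omega^{1/3},\omega^{1/3}]$ through which a perturbation of $\omega$-magnitude, propagated by spectral-bounded weight matrices, can tip the sign. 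Substituting this count into the telescoping expansion and exploiting the orthogonality of the sparse diagonal masks to bound layerwise products yields $\|\nabla_{\nnweight}\phi_j(\xb;\tilde\nnweight) - \nabla_{\nnweight}\phi_j(\xb;\nnweight_0)\|_2 = \tilde O(\omega^{1/3}\layerLen^2 d^{-1/2}\sqrt{m\log m})$, where the $d^{-1/2}$ comes from the $\sqrt{m}$ scaling and the output width of $\bphi$. Multiplying by $\|\nnweight-\nnweight'\|_2\le 2\omega$ and picking up one extra $\layerLen$ from the summation over layer indices in the original telescoping gives the target bound $C_2\omega^{4/3}\layerLen^3 d^{-1/2}\sqrt{m\log m}$. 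A union bound over the $TK$ data points introduces the $\log(TK)$ factor in the width requirement, and the conditions on $m$ and $\omega$ are precisely those under which the anticoncentration argument and the spectral-norm bounds hold uniformly.

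The main obstacle is the delicate layer-by-layer bookkeeping in the telescoping Jacobian-difference expansion: each layer introduces a potential blow-up from products of weight matrices, partially cancelled by the sparsity of the diagonal mask difference, and obtaining the tight exponents $\omega^{4/3}$ and $\layerLen^3$ without accumulating spurious factors requires carefully pairing the spectral-norm bounds on the perturbed $\Wb_\ell$ with both $\ell_2$ and $\ell_\infty$ control of the intermediate activations $\alpha_\ell$. This is exactly the content of Theorem~5 of \citet{cao2019generalization2}, whose induction over $\ell$ carries out the bookkeeping cleanly; I would follow that template rather than reprove the spectral and anticoncentration inequalities from scratch.
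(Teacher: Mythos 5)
The paper does not actually prove this lemma: it is imported verbatim as Theorem 5 of \citet{cao2019generalization2}, so there is no in-paper proof to compare against. Your sketch correctly reconstructs the semi-smoothness argument used in that source (and in \citet{allen2019convergence}): telescoping the Jacobian difference across layers, bounding the number of sign flips per layer by $\tilde O(m\omega^{2/3}\layerLen)$ via Gaussian anticoncentration of the pre-activations, and combining this sparsity with spectral-norm control of the (perturbed) weight matrices and activation norms to arrive at the $\omega^{4/3}\layerLen^{3}\sqrt{m\log m}$ rate, with the $\log(TK)$ in the width requirement coming from the union bound over the data points. One caveat: your opening step via the fundamental theorem of calculus requires $t\mapsto\phi_j(\xb;\nnweight(t))$ to be absolutely continuous with the stated a.e.\ derivative along the segment; this is defensible for a piecewise-polynomial ReLU network, but the published proofs deliberately avoid it and instead expand $\phi_j(\xb;\nnweight)-\phi_j(\xb;\nnweight')-\la\nabla_{\nnweight}\phi_j(\xb;\nnweight'),\nnweight-\nnweight'\ra$ directly through the forward pass, isolating the diagonal sign-difference matrices $\Db_\ell(\nnweight)-\Db_\ell(\nnweight')$ term by term, which is exactly where your sparse-sign-change bound enters. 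Your intermediate attribution of the $\layerLen$ powers (an $\layerLen^{2}$ gradient-variation bound plus one $\layerLen$ from the layer sum) differs slightly from the bookkeeping in the cited proof, but the final exponents on $\omega$, $\layerLen$, and $m$ match the statement, so this is a presentational rather than a substantive discrepancy.
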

Similar results on the local linearization of an overparameterized neural network are also presented in \cite{allen2019convergence,cao2019generalization2}.

For the output layer $\btheta^*$, we perform a UCB type exploration and thus we need to characterize the uncertainty of the estimation. The next lemma shows the confidence bound of the estimate $\btheta_t$ in Algorithm \ref{alg:deepUCB}.
\begin{lemma}\label{lemma:confidence_bound}
Suppose Assumption and \ref{asp:ntk_pd} hold. For any $\delta\in(0,1)$, with probability at least $1-\delta$, the distance between the estimated weight vector $\btheta_t$ by Algorithm \ref{alg:deepUCB}  and $\btheta^*$  can be bounded as follows:
\begin{align*}
  &\bigg\| \btheta_{t}-\btheta^*-\Ab_t^{-1}\sum_{s=1}^t\bphi(\xb_{s,a_s};\nnweight_{s-1})\btheta_0^{\top}\gb(\xb_{s,a_s};\nnweight^{(0)})(\nnweight^*-\nnweight^{(0)})\bigg\|_{\Ab_t}\\
   &\leq \nu\sqrt{2\big(d\log(1+t(\log \bufferLen K)/\lambda )+\log 1/\delta\big)}+\lambda^{1/2}M,
\end{align*}
for any $t\in[T]$.
\end{lemma}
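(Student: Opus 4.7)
The plan is to mimic the classical self-normalized confidence bound for ridge regression from \citet{abbasi2011improved}, adapted to the situation in which the ``feature'' $\bphi(\xb_{i,a_i};\nnweight_{i-1})$ is a neural-network activation and the reward is only \emph{approximately} linear in it, with the linearization error being precisely the cross term that appears subtracted on the left-hand side of the bound.

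First I abbreviate $\bphi_i:=\bphi(\xb_{i,a_i};\nnweight_{i-1})$ and $\gb_i:=\gb(\xb_{i,a_i};\nnweight^{(0)})$. Lemma \ref{lemma:linearization} gives $r(\xb_{i,a_i})=\btheta^{*\top}\bphi_i+\btheta_0^{\top}\gb_i(\nnweight^{*}-\nnweight^{(0)})$, so $\empR_i=\btheta^{*\top}\bphi_i+\btheta_0^{\top}\gb_i(\nnweight^{*}-\nnweight^{(0)})+\xi_i$. Substituting into $\bbb_t=\sum_{i=1}^t\empR_i\bphi_i$ and using $\sum_{i=1}^t\bphi_i\bphi_i^{\top}=\Ab_t-\lambda\Ib$, I obtain the key identity
\begin{align*}
\btheta_t-\btheta^*-\Ab_t^{-1}\sum_{s=1}^t\bphi_s\btheta_0^{\top}\gb_s(\nnweight^{*}-\nnweight^{(0)})=-\lambda\Ab_t^{-1}\btheta^* + \Ab_t^{-1}\sum_{i=1}^t\xi_i\bphi_i.
\end{align*}
Taking the $\Ab_t$-norm and applying the triangle inequality splits the quantity to be bounded into a \textbf{regularization bias} and a \textbf{martingale noise} term.

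For the bias term, $\|\lambda\Ab_t^{-1}\btheta^*\|_{\Ab_t}=\lambda\|\btheta^*\|_{\Ab_t^{-1}}\leq\lambda^{1/2}\|\btheta^*\|_2\leq\lambda^{1/2}M$, using $\Ab_t\succeq\lambda\Ib$ and the assumed bound on $\|\btheta^*\|_2$. For the noise term, I use $\|\Ab_t^{-1}\sum_i\xi_i\bphi_i\|_{\Ab_t}=\|\sum_i\xi_i\bphi_i\|_{\Ab_t^{-1}}$ and invoke the natural filtration $\{\cF_i\}$ of the interaction: because Algorithm \ref{alg:deepUCB} only retrains at epoch boundaries on strictly past data, $\nnweight_{i-1}$ and hence $\bphi_i$ is $\cF_{i-1}$-measurable, while $\xi_i\mid\cF_{i-1}$ is $\nu$-sub-Gaussian by assumption. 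The self-normalized martingale bound (Theorem~1 of \citet{abbasi2011improved}) then delivers, with probability at least $1-\delta$ uniformly in $t$,
\begin{align*}
\Big\|\sum_{i=1}^t\xi_i\bphi_i\Big\|_{\Ab_t^{-1}}\leq\nu\sqrt{2\log\big(\det(\Ab_t)^{1/2}\det(\lambda\Ib)^{-1/2}/\delta\big)}.
\end{align*}
The determinant--trace inequality combined with the uniform activation bound $\|\bphi_i\|_2^2\leq d\log(\nnIter)\log(TK/\delta)$ from Lemma \ref{lemma:gradient_function_bound} gives $\log(\det(\Ab_t)/\det(\lambda\Ib))\leq d\log(1+t\log(\bufferLen K)/\lambda)$ after absorbing the relevant polylog factors into the $\log(\bufferLen K)$ shorthand used in the statement. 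Summing the bias and noise bounds yields the claim; a union bound over the high-probability events of Lemma \ref{lemma:gradient_function_bound} and the self-normalized inequality keeps the total failure probability at $O(\delta)$.

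The one subtle point I anticipate is the $\cF_{i-1}$-measurability of $\bphi_i$: because $\nnweight_{i-1}$ is produced by Algorithm \ref{alg:update_nn} from data collected in rounds $1,\ldots,qH<i$, it is a deterministic function of $\cF_{i-1}$, which is exactly the predictability needed to invoke the self-normalized inequality. Everything else is a direct adaptation of the LinUCB confidence argument to neural features together with Lemma \ref{lemma:linearization}.
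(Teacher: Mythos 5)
Your proposal is correct and follows essentially the same route as the paper's own proof: the identity $\btheta_t-\btheta^*-\Ab_t^{-1}\sum_s\bphi_s\btheta_0^{\top}\gb_s(\nnweight^*-\nnweight^{(0)})=-\lambda\Ab_t^{-1}\btheta^*+\Ab_t^{-1}\sum_i\xi_i\bphi_i$ obtained from Lemma \ref{lemma:linearization}, a triangle-inequality split into the $\lambda^{1/2}M$ bias term and the self-normalized martingale term via Theorem 1 of \citet{abbasi2011improved}, and the determinant bound using the activation norm from Lemma \ref{lemma:gradient_function_bound}. Your explicit justification of the $\cF_{i-1}$-measurability of $\bphi_i$ (since $\nnweight_{i-1}$ is trained only on strictly past data) is a point the paper leaves implicit, but it does not change the argument.
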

Note that the confidence bound in Lemma \ref{lemma:confidence_bound} is different from the standard result for linear contextual bandits in \cite{abbasi2011improved}. The additional term on the left hand side of the confidence bound is due to the bias caused by the representation learning using a deep neural network. To deal with this extra term, we need the following technical lemma.
\begin{lemma}\label{lemma:A_inverse_sum_phi}
Assume  that $\Ab_t=\lambda\Ib+\sum_{s=1}^t\bphi_s\bphi_s^{\top}$, where $\bphi_t\in\RR^d$ and $\|\bphi_t\|_2\leq G$ for all $t\geq 1$ and some constants $\lambda,G>0$. Let $\{\zeta_t\}_{t=1,\ldots}$ be a real-value sequence such that $|\zeta_t|\leq U$ for some constant $U>0$. Then we have
\begin{align*}
   \bigg\|\Ab_t^{-1}\sum_{s=1}^t \bphi_s \zeta_s\bigg\|_2\leq 2Ud, \quad \forall t=1,2,\ldots
\end{align*}
\end{lemma}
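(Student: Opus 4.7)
I plan to treat $\vb:=\Ab_t^{-1}\sum_{s=1}^t \bphi_s\zeta_s$ as a ridge-regression style estimator with pseudo-responses $\zeta_s$ and design columns $\bphi_s$, and to bound $\|\vb\|_2$ by a short chain of elementary matrix inequalities terminated by the standard elliptic-potential trace identity. The first step is the triangle inequality combined with the uniform bound $|\zeta_s|\leq U$, which gives $\|\vb\|_2 \leq U\sum_{s=1}^t \|\Ab_t^{-1}\bphi_s\|_2$. For each summand I would use $\Ab_t\succeq\lambda\Ib$ (so $\Ab_t^{-1}\preceq\lambda^{-1}\Ib$) to write $\bphi_s^\top\Ab_t^{-2}\bphi_s \leq \lambda^{-1}\,\bphi_s^\top\Ab_t^{-1}\bphi_s$, which yields the pointwise estimate $\|\Ab_t^{-1}\bphi_s\|_2 \leq \lambda^{-1/2}\|\bphi_s\|_{\Ab_t^{-1}}$.

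The key quantitative input is the identity
\[
\sum_{s=1}^t \bphi_s^\top \Ab_t^{-1}\bphi_s \;=\; \mathrm{tr}\bigl(\Ab_t^{-1}(\Ab_t-\lambda\Ib)\bigr) \;=\; d - \lambda\,\mathrm{tr}(\Ab_t^{-1}) \;\leq\; d,
\]
which is immediate from $\sum_s \bphi_s\bphi_s^\top = \Ab_t-\lambda\Ib$ and caps the effective ``spectral mass'' of the $t$ feature vectors at the ambient dimension $d$. Applying Cauchy--Schwarz across the $t$ indices then yields $\sum_s \|\bphi_s\|_{\Ab_t^{-1}}\leq\sqrt{td}$, producing a natural first bound of order $U\sqrt{td/\lambda}$ on $\|\vb\|_2$.

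The main obstacle is upgrading this to the stated $t$-independent constant $2Ud$: the $\sqrt{t}$ loss incurred by the outer Cauchy--Schwarz step must be absorbed into a dimensional factor. I expect this to require sharpening the estimate on $\sum_s\|\bphi_s\|_{\Ab_t^{-1}}$ through the log-determinant form of the elliptic-potential lemma (so that the effective number of ``active'' directions is $d$ rather than $t$), together with careful bookkeeping of the regularization scale $\lambda$ fixed by Algorithm~\ref{alg:deepUCB}. Once that sharpening is in place, only routine algebra remains to land at the constant $2Ud$.
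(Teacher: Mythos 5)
Your proposal does not close the argument, and the place where it stalls is not a bookkeeping issue that an elliptic-potential refinement can repair. After the triangle inequality you are committed to bounding $U\sum_{s=1}^t\|\Ab_t^{-1}\bphi_s\|_2$, and this quantity genuinely grows like $\sqrt{t}$: take $\bphi_s=\sqrt{\lambda/t}\,\eb_1$ for all $s\le t$, so that $\Ab_t=\lambda\Ib+\lambda\eb_1\eb_1^{\top}$ and $\sum_{s=1}^t\|\Ab_t^{-1}\bphi_s\|_2=t\cdot\sqrt{\lambda/t}/(2\lambda)=\tfrac12\sqrt{t/\lambda}$. Hence the $\sqrt{t}$ in your bound $U\sqrt{td/\lambda}$ is not an artifact of the outer Cauchy--Schwarz step, and no sharpening of $\sum_s\|\bphi_s\|_{\Ab_t^{-1}}$ (log-determinant or otherwise) can rescue a chain that begins with the triangle inequality: the per-sample Euclidean norms are already too large. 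The paper's proof avoids this by never splitting the sum into per-sample norms. It lifts each $\bphi_s$ to a unit vector $\tilde\bphi_s\in\RR^{d+1}$, uses $\zeta_s\tilde\bphi_s=\tilde\bphi_s\tilde\bphi_s^{\top}(\zeta_s\tilde\bphi_s)$, expands $\zeta_s\tilde\bphi_s$ over the coordinate basis with coefficients $\beta_{s,j}$ bounded by $U$, and bounds each of the resulting $d$ terms by $2U$ via the operator-norm comparison $\Ab_t\succ\sum_{s:\beta_{s,j}\ge 0}\beta_{s,j}\bphi_s\bphi_s^{\top}$ together with $\|\Ab^{-1}\Bb\|_2\le 1$ for $\Ab\succ\Bb\succeq 0$; the signed coefficients are retained until the very last step.

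You should also know that the obstruction you ran into is real and not merely a defect of your method: in the same example, taking $\zeta_s\equiv U$ gives $\|\Ab_t^{-1}\sum_{s=1}^t\bphi_s\zeta_s\|_2=\tfrac{U}{2}\sqrt{t/\lambda}$, which exceeds $2Ud$ once $t>16 d^2\lambda$, so the lemma as stated (with only $\|\bphi_s\|_2\le G$) cannot hold. The paper's own derivation loses exactly this contribution in the last equality of \eqref{eq:natural_basis_out}, where the off-diagonal block of $\tilde\bphi_s\tilde\bphi_s^{\top}$ involving the appended $(d{+}1)$-st coordinate is dropped; the argument is sound when that coordinate vanishes, i.e.\ when $\|\bphi_s\|_2=G$ exactly or more generally when the feature norms are bounded below. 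In short: your computations are correct as far as they go, but the deferred ``sharpening'' is impossible along your route, and a correct proof (under a lower bound on $\|\bphi_s\|_2$) requires the sign-preserving operator-norm decomposition rather than the triangle inequality.
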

The next lemma provides some standard bounds on the feature matrix $\Ab_t$, which is a combination of Lemma 10 and Lemma 11 in \cite{abbasi2011improved}.
\begin{lemma}\label{lemma:det_sum}
Let $\{\xb_t\}_{t=1}^{\infty}$ be a sequence in $\RR^d$ and $\lambda>0$. Suppose $\|\xb_t\|_2\leq G$ and $\lambda\geq\max\{1,G^2\}$ for some $G>0$. Let $\Ab_t=\lambda\Ib+\sum_{s=1}^{t}\xb_t\xb_t^{\top}$. Then we have
\begin{align*}
    \det(\Ab_t)\leq(\lambda+tG^2/d)^d,\quad\text{and }\sum_{t=1}^{T}\|\xb_t\|_{\Ab_{t-1}^{-1}}^2\leq2\log\frac{\det(\Ab_T)}{\det(\lambda\Ib)}\leq 2d\log(1+TG^2/(\lambda d)).
\end{align*}
\end{lemma}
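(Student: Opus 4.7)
The plan is to prove the two inequalities separately using standard linear algebra, following the approach of Abbasi-Yadkori et al. (2011). The first inequality is a simple trace/AM-GM bound on the determinant, while the second is the classical elliptical potential argument combined with an estimate on $\|\xb_t\|_{\Ab_{t-1}^{-1}}^2$.

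For the determinant bound, I would argue as follows. Since $\Ab_t$ is positive definite and symmetric, its eigenvalues $\mu_1,\dots,\mu_d$ are positive, and $\det(\Ab_t)=\prod_i \mu_i$. By the AM-GM inequality, $\prod_i \mu_i \leq \bigl(\tfrac{1}{d}\sum_i \mu_i\bigr)^d = (\mathrm{tr}(\Ab_t)/d)^d$. Computing the trace directly from the definition of $\Ab_t$ yields $\mathrm{tr}(\Ab_t) = d\lambda + \sum_{s=1}^t \|\xb_s\|_2^2 \leq d\lambda + tG^2$, which immediately gives $\det(\Ab_t)\leq (\lambda + tG^2/d)^d$.

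For the second inequality, the main tool is the matrix determinant lemma applied to the rank-one update $\Ab_t = \Ab_{t-1} + \xb_t\xb_t^\top$, which gives $\det(\Ab_t) = \det(\Ab_{t-1})\,(1+\|\xb_t\|_{\Ab_{t-1}^{-1}}^2)$. Taking logarithms and telescoping from $t=1$ to $T$ yields $\log\det(\Ab_T)-\log\det(\lambda\Ib) = \sum_{t=1}^T \log(1+\|\xb_t\|_{\Ab_{t-1}^{-1}}^2)$. To turn the logarithm into a linear factor I use the elementary inequality $x \leq 2\log(1+x)$ valid on $x\in[0,1]$; the precondition $\|\xb_t\|_{\Ab_{t-1}^{-1}}^2 \leq 1$ follows from $\Ab_{t-1}\succeq \lambda\Ib$, hence $\|\xb_t\|_{\Ab_{t-1}^{-1}}^2 \leq \|\xb_t\|_2^2/\lambda \leq G^2/\lambda \leq 1$ by the hypothesis $\lambda \geq \max\{1,G^2\}$. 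This produces the middle inequality $\sum_{t=1}^T \|\xb_t\|_{\Ab_{t-1}^{-1}}^2 \leq 2\log\bigl(\det(\Ab_T)/\det(\lambda\Ib)\bigr)$.

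Finally, the rightmost inequality follows by combining the first part with $\det(\lambda\Ib)=\lambda^d$: we get $\log\bigl(\det(\Ab_T)/\det(\lambda\Ib)\bigr) \leq d\log\bigl((\lambda+TG^2/d)/\lambda\bigr) = d\log(1+TG^2/(\lambda d))$, and multiplying by $2$ closes the chain. There is no genuinely hard step here — the only small subtlety is verifying the range condition $\|\xb_t\|_{\Ab_{t-1}^{-1}}^2 \leq 1$ needed to apply $x\leq 2\log(1+x)$, which is precisely what the assumption $\lambda\geq\max\{1,G^2\}$ is there to guarantee. Since this lemma is essentially a restatement of Lemmas 10 and 11 of Abbasi-Yadkori et al.\ (2011), I would likely cite that source and provide only a brief self-contained sketch as above.
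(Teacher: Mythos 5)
Your proof is correct and follows exactly the standard argument of Lemmas 10 and 11 in Abbasi-Yadkori et al.\ (2011), which is also all the paper does here --- it states the lemma as a combination of those two results and gives no independent proof. Your verification that $\lambda\geq\max\{1,G^2\}$ guarantees $\|\xb_t\|_{\Ab_{t-1}^{-1}}^2\leq 1$, so that $x\leq 2\log(1+x)$ applies, is the one point worth spelling out, and you handle it correctly.
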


Now we are ready to prove the regret bound of Algorithm \ref{alg:deepUCB}.
\begin{proof}[Proof of Theorem \ref{thm:regret}]
For a time horizon $T$, without loss of generality, we assume $T=Q\bufferLen$ for some epoch number $Q$. By the definition of regret in \eqref{eq:def_regret}, we have
\begin{align*}
    R_T&=\EE\bigg[\sum_{t=1}^{T}(\empR(\xb_{t,a_t^*})-\empR(\xb_{t,a_t}))\bigg]=\EE\bigg[\sum_{q=1}^{Q}\sum_{i=1}^{\bufferLen}(\empR(\xb_{q\bufferLen+i,a_{q\bufferLen+i}^*})-\empR(\xb_{q\bufferLen+i,a_{q\bufferLen+i}}))\bigg].
\end{align*}
Note that for the simplicity of presentation, we omit the conditional expectation notation of $\nnweight^{(0)}$ in the rest of the proof when the context is clear. In the second equation, we rewrite the time index $t=q\bufferLen+i$ as the $i$-th iteration in the $q$-th epoch. 

By the definition in \eqref{eq:nn_model}, we have $\EE[\empR(\xb_{t,k})|\xb_{t,k}]=r(\xb_{t,k})$ for all $t\in[T]$ and $k\in{K}$. Based on the linearization of reward generating function, we can decompose the instaneous regret into different parts and upper bound them individually. In particular, by Lemma \ref{lemma:linearization}, there exists a vector $\nnweight^*\in\RR^p$ such that we can write the expectation of the reward generating function  as  a linear function. Then it holds that
\begin{align}\label{eq:regret_decop_0}
    r(\xb_{t,a_t^*})-r(\xb_{t,a_t})
    &=\btheta_0^{\top}\big[\gb\big(\xb_{t,a_t^*};\nnweight^{(0)}\big)-\gb\big(\xb_{t,a_t};\nnweight^{(0)}\big)\big]\big(\nnweight^*-\nnweight^{(0)}\big)\notag\\
    &\quad+\btheta^{*\top}\big[\bphi\big(\xb_{t,a_t^*};\nnweight_{t-1}\big)-\bphi\big(\xb_{t,a_t};\nnweight_{t-1}\big)\big]\notag\\
    &=\btheta_0^{\top}\big[\gb\big(\xb_{t,a_t^*};\nnweight^{(0)}\big)-\gb\big(\xb_{t,a_t};\nnweight^{(0)}\big)\big]\big(\nnweight^*-\nnweight^{(0)}\big)\notag\\
    &\quad+\btheta_{t-1}^{\top}\big[\bphi\big(\xb_{t,a_t^*};\nnweight_{t-1}\big)-\bphi\big(\xb_{t,a_t};\nnweight_{t-1}\big)\big]\notag\\
    &\quad-(\btheta_{t-1}-\btheta^*)^{\top}\big[\bphi\big(\xb_{t,a_t^*};\nnweight_{t-1}\big)-\bphi\big(\xb_{t,a_t};\nnweight_{t-1}\big)\big].
\end{align} 
The first term in \eqref{eq:regret_decop_0} can be easily bounded using the first order stability in Assumption \ref{asp:gradient_fos} and the distance between $\nnweight^*$ and $\nnweight^{(0)}$ in Lemma \ref{lemma:linearization}. The second term in \eqref{eq:regret_decop_0} is related to the optimistic rule of choosing arms in Line \ref{algline:ucb_rule} of Algorithm \ref{alg:deepUCB}, which can be bounded using the same technique for LinUCB \citep{abbasi2011improved}. For the last term in \eqref{eq:regret_decop_0}, we need to prove that the estimate of weight parameter $\btheta_{t-1}$ lies in a confidence ball centered at $\btheta^*$. 
For the ease of notation, we define 
\begin{align}\label{eq:def_mt}
    \Mb_t=\Ab_t^{-1}\sum_{s=1}^t\bphi(\xb_{s,a_s};\nnweight_{s-1})\btheta_0^{\top}\gb(\xb_{s,a_s};\nnweight^{(0)})(\nnweight^*-\nnweight^{(0)}).
\end{align}
Then the second term in \eqref{eq:regret_decop_0} can be bounded in the following way:
\begin{align}\label{eq:intro_confidence_bound}
    &-(\btheta_{t-1}-\btheta^*)^{\top}\big[\bphi\big(\xb_{t,a_t^*};\nnweight_{t-1}\big)-\bphi\big(\xb_{t,a_t};\nnweight_{t-1}\big)\big]\notag\\
    &=-\big(\btheta_{t-1}-\btheta^*-\Mb_{t-1}\big)^{\top}\bphi\big(\xb_{t,a_t^*};\nnweight_{t-1}\big)+\big(\btheta_{t-1}-\btheta^*-\Mb_{t-1}\big)^{\top}\bphi\big(\xb_{t,a_t};\nnweight_{t-1}\big)\notag\\
    &\qquad-\Mb_{t-1}^{\top}\big[\bphi\big(\xb_{t,a_t^*};\nnweight_{t-1}\big)-\bphi\big(\xb_{t,a_t};\nnweight_{t-1}\big)\big]\notag\\
    &\leq\|\btheta_{t-1}-\btheta^*-\Mb_{t-1}\|_{\Ab_{t-1}}\cdot\|\bphi(\xb_{t,a_t^*};\nnweight_{t-1})\|_{\Ab_{t-1}^{-1}}+\|\btheta_{t-1}-\btheta^*-\Mb_{t-1}\|_{\Ab_{t-1}}\cdot\|\bphi(\xb_{t,a_t};\nnweight_{t-1})\|_{\Ab_{t-1}^{-1}}\notag\\
    &\qquad+\big\|\Mb_{t-1}^{\top}\big[\bphi\big(\xb_{t,a_t^*};\nnweight_{t-1}\big)-\bphi\big(\xb_{t,a_t};\nnweight_{t-1}\big)\big]\big\|_2\notag\\
    &\leq \alpha_t\|\bphi(\xb_{t,a_t^*};\nnweight_{t-1})\|_{\Ab_{t-1}^{-1}}+\alpha_t\|\bphi(\xb_{t,a_t};\nnweight_{t-1})\|_{\Ab_{t-1}^{-1}}\notag\\
    &\qquad+\|\Mb_{t-1}\|_2\cdot\|\bphi\big(\xb_{t,a_t^*};\nnweight_{t-1}\big)-\bphi\big(\xb_{t,a_t};\nnweight_{t-1}\big)\|_2.
\end{align}
where the last inequality is due to Lemma \ref{lemma:confidence_bound} and the choice of $\alpha_t$. Plugging \eqref{eq:intro_confidence_bound} back into \eqref{eq:regret_decop_0} yields
\begin{align} 
    r(\xb_{t,a_t^*})-r(\xb_{t,a_t})&\leq \alpha_t\|\bphi(\xb_{t,a_t};\nnweight_{t-1})\|_{\Ab_{t-1}^{-1}}-\alpha_t\|\bphi(\xb_{t,a_t^*};\nnweight_{t-1})\|_{\Ab_{t-1}^{-1}}\notag\\
    &\qquad+\alpha_t\|\bphi(\xb_{t,a_t^*};\nnweight_{t-1})\|_{\Ab_{t-1}^{-1}}+\alpha_t\|\bphi(\xb_{t,a_t};\nnweight_{t-1})\|_{\Ab_{t-1}^{-1}}\notag\\
    &\qquad+\|\Mb_{t-1}\|_2\cdot\|\bphi\big(\xb_{t,a_t^*};\nnweight_{t-1}\big)-\bphi\big(\xb_{t,a_t};\nnweight_{t-1}\big)\|_2\notag\\    
    &\qquad+\|\btheta_0\|_2\cdot\|\gb(\xb_{t,a_t^*};\nnweight^{(0)})-\gb(\xb_{t,a_t};\nnweight^{(0)})\|_F\cdot\|\nnweight^{*}-\nnweight^{(0)}\|_2\notag\\
    &\leq 2\alpha_t\|\bphi(\xb_{t,a_t};\nnweight_{t-1})\|_{\Ab_{t-1}^{-1}}+\|\Mb_{t-1}\|_2\cdot\|\bphi\big(\xb_{t,a_t^*};\nnweight_{t-1}\big)-\bphi\big(\xb_{t,a_t};\nnweight_{t-1}\big)\|_2\notag\\ 
    &\qquad+\fosPara\|\btheta_0\|_2\cdot\|\xb_{t,a_t^*}-\xb_{t,a_t}\|_2\cdot\|\nnweight^{*}-\nnweight^{(0)}\|_2,
\end{align}
where in the first inequality we used the definition of upper confidence bound in Algorithm \ref{alg:deepUCB} and the second inequality is due to Assumption \ref{asp:gradient_fos}. Recall the linearization of $\phi_j$ in Lemma \ref{lemma:local_linear}, we have
\begin{align*}
   \hat\bphi(\xb;\nnweight_{t-1})= \bphi(\xb;\nnweight_{0})+\gb(\xb;\nnweight_{0})(\nnweight_{t-1}-\nnweight_{0}).
\end{align*} 
Note that by the initialization, we have $\bphi(\xb;\nnweight_0)=\zero$ for any $\xb\in\RR^d$. Thus, it holds that
\begin{align}
    \bphi\big(\xb_{t,a_t^*};\nnweight_{t-1}\big)-\bphi\big(\xb_{t,a_t};\nnweight_{t-1}\big)&=\bphi\big(\xb_{t,a_t^*};\nnweight_{t-1}\big)-\bphi\big(\xb_{t,a_t^*};\nnweight_{0}\big)+\bphi\big(\xb_{t,a_t};\nnweight_{0}\big)-\bphi\big(\xb_{t,a_t};\nnweight_{t-1}\big)\notag\\
    &=\bphi\big(\xb_{t,a_t^*};\nnweight_{t-1}\big)-\hat\bphi\big(\xb_{t,a_t^*};\nnweight_{t-1}\big)+\gb(\xb_{t,a_t^*};\nnweight_{0})(\nnweight_{t-1}-\nnweight_{0})\notag\\
    &\qquad+\bphi\big(\xb_{t,a_t};\nnweight_{t-1}\big)-\bphi\big(\xb_{t,a_t};\nnweight_{t-1}\big)-\gb(\xb_{t,a_t};\nnweight_{0})(\nnweight_{t-1}-\nnweight_{0}),
\end{align}
which immediately implies that
\begin{align}
    &\big\|\bphi\big(\xb_{t,a_t^*};\nnweight_{t-1}\big)-\bphi\big(\xb_{t,a_t};\nnweight_{t-1}\big)\big\|_2\notag\\
    &\leq\big\|\bphi\big(\xb_{t,a_t^*};\nnweight_{t-1}\big)-\hat\bphi\big(\xb_{t,a_t^*};\nnweight_{t-1}\big)\big\|_2+\big\|\bphi\big(\xb_{t,a_t^*};\nnweight_{t-1}\big)-\hat\bphi\big(\xb_{t,a_t^*};\nnweight_{t-1}\big)\big\|_2\notag\\
    &\qquad+\big\|\big(\gb(\xb_{t,a_t^*};\nnweight_{0})-\gb(\xb_{t,a_t};\nnweight_{0})\big)(\nnweight_{t-1}-\nnweight_{0})\big\|_2\notag\\
    &\leq C_0\omega^{4/3}L^3d^{1/2}\sqrt{m\log m}+\fosPara\|\xb_{t,a_t^*}-\xb_{t,a_t}\|_2\|\nnweight_{t-1}-\nnweight^{(0)}\|_2,
\end{align}
where the second inequality is due to Lemma \ref{lemma:local_linear} and Assumption \ref{asp:gradient_fos}. Therefore, the instaneous regret can be further upper bounded as follows.
\begin{align}\label{eq:decomp_instaneous_regret}   
  r(\xb_{t,a_t^*})-r(\xb_{t,a_t})&\leq   2\alpha_t\|\bphi(\xb_{t,a_t};\nnweight_{t-1})\|_{\Ab_{t-1}^{-1}}+\fosPara\|\btheta_0\|_2\cdot\|\xb_{t,a_t^*}-\xb_{t,a_t}\|_2\cdot\|\nnweight^{*}-\nnweight^{(0)}\|_2\notag\\
  &\qquad+\|\Mb_{t-1}\|_2\cdot\big(C_0\omega^{4/3}L^3d^{1/2}\sqrt{m\log m}+\fosPara\|\xb_{t,a_t^*}-\xb_{t,a_t}\|_2\|\nnweight_{t-1}-\nnweight^{(0)}\|_2\big).
\end{align}
By Assumption \ref{asp:nondegen} we have $\|\xb_{t,a_t^*}-\xb_{t,a_t}\|_2\leq 2$. By Lemma \ref{lemma:linearization} and Lemma \ref{lemma:gradient_function_bound}, we have
\begin{align}
    \|\nnweight^{*}-\nnweight^{(0)}\|_2\leq\sqrt{1/m(\rb-\tilde\rb)^{\top}\Hb^{-1}(\rb-\tilde\rb)}, \quad \|\nnweight_{t}-\nnweight^{(0)}\|_2\leq\frac{\delta^{3/2}}{m^{1/2}T\nnIter^{9/2}\layerLen^6\log^3(m)}.
\end{align}
In addition, since the entries of $\btheta_0$ are i.i.d. generated  from $N(0,1/d)$, we have $\|\btheta_0\|_2\leq 2(2+\sqrt{d^{-1}\log(1/\delta)})$ with probability at least $1-\delta$ for any $\delta>0$. By Lemma \ref{lemma:gradient_function_bound}, we have 
$\|\gb(\xb_{t,a_t};\nnweight^{(0)})\|_F\leq C_{1}\sqrt{dm}$. Therefore, 
\begin{align*}
    \big|\btheta_0^{\top}\gb(\xb_{s,a_s};\nnweight^{(0)})(\nnweight^*-\nnweight^{(0)})\big|\leq C_{2}d\sqrt{\log(1/\delta)(\rb-\tilde\rb)^{\top}\Hb^{-1}(\rb-\tilde\rb)}.
\end{align*}
Then, by the definition of $\Mb_t$ in \eqref{eq:def_mt} and Lemma \ref{lemma:A_inverse_sum_phi}, we have
\begin{align}\label{eq:bound_mt}
    \|\Mb_{t-1}\|_2\leq C_{3}d^2\sqrt{\log(1/\delta)(\rb-\tilde\rb)^{\top}\Hb^{-1}(\rb-\tilde\rb)}.
\end{align}
Substituting \eqref{eq:bound_mt} and the above results on $\|\xb_{t,a_t}-\xb_{t,a_t^*}\|_2$, $\|\btheta_0\|_2$, $\|\nnweight^*-\nnweight^{(0)}\|_2$ and $\|\nnweight_{t-1}-\nnweight^{(0)}\|_2$ back into \eqref{eq:decomp_instaneous_regret} further yields
\begin{align*}
   &r(\xb_{t,a_t^*})-r(\xb_{t,a_t})\\
   &\leq 2\alpha_t\|\bphi(\xb_{t,a_t};\nnweight_{t-1})\|_{\Ab_{t-1}^{-1}}+C_4\fosPara m^{-1/2}\sqrt{\log(1/\delta)(\rb-\tilde\rb)^{\top}\Hb^{-1}(\rb-\tilde\rb)}\\
   &\qquad+\bigg(C_0\omega^{4/3}L^3d^{1/2}\sqrt{m\log m}+\frac{2\fosPara\delta^{3/2}}{m^{1/2}T\nnIter^{9/2}\layerLen^6\log^3(m)}\bigg)C_{3}d^2\sqrt{\log(1/\delta)(\rb-\tilde\rb)^{\top}\Hb^{-1}(\rb-\tilde\rb)}.
\end{align*}
Note that we have $\omega=O(m^{-1/2}\|\rb-\tilde\rb\|_{\Hb^{-1}})$ by Lemma \ref{lemma:linearization}.
Therefore, the regret of the $\algname$ is
\begin{align*}
    R_T&\leq\sqrt{Q\bufferLen\max_{t\in[T]}\alpha_t^2\sum_{q=1}^{Q}\sum_{i=1}^{\bufferLen}\|\bphi(\xb_{i,a_i};\nnweight_{q\bufferLen+i})\|_{\Ab_{i}^{-1}}^2}+C_4\fosPara m^{-1/2}T\sqrt{\log(1/\delta)}\|\rb-\tilde\rb\|_{\Hb^{-1}}\\
    &\qquad+\bigg(\frac{C_0T\layerLen^3 d^{1/2}\sqrt{\log m} \|\rb-\tilde\rb\|_{\Hb^{-1}}^{4/3}}{m^{1/6}}+\frac{2\fosPara\delta^{3/2}}{m^{1/2}\nnIter^{9/2}\layerLen^6\log^3(m)}\bigg)C_{3}d^2\sqrt{\log(1/\delta)}\|\rb-\tilde\rb\|_{\Hb^{-1}}\\
    &\leq C_5\sqrt{Td\log(1+ TG^2/(\lambda d))}\big(\nu\sqrt{d\log(1+T(\log T K)/\lambda)+\log1/\delta}+\lambda^{1/2}M\big)\\
    &\qquad+C_6\fosPara \layerLen^3 d^{5/2}m^{-1/6}T\sqrt{\log m\log(1/\delta)\log(TK/\delta)}\|\rb-\tilde\rb\|_{\Hb^{-1}},
\end{align*}
where the first inequality is due to Cauchy's inequality, the second inequality comes from the upper bound of $\alpha_t$ in Lemma \ref{lemma:confidence_bound} and Lemma \ref{lemma:det_sum}. $\{C_j\}_{j=0,\ldots,6}$ are absolute constants that are independent of problem parameters. 
\end{proof}

\section{Proof of Technical Lemmas}
In this section, we provide the proof of technical lemmas used in the regret analysis of Algorithm \ref{alg:deepUCB}.

\subsection{Proof of Lemma \ref{lemma:linearization}}
Before we prove the lemma, we first present some notations and a supporting lemma for simplification. Let $\bbeta=(\btheta^{\top},\nnweight^{\top})^{\top}\in\RR^{d+p}$ be the concatenation of the exploration parameter and the hidden layer parameter of the neural network $f(\xb;\bbeta)=\btheta^{\top}\bphi(\xb;\nnweight)$. Note that for any input data vector $\xb\in\RR^d$, we have
\begin{align}\label{eq:gradient_beta}
    \frac{\partial}{\partial\bbeta}f(\xb;\bbeta)= \bigg(\bphi(\xb;\nnweight)^{\top},\btheta^{\top}\frac{\partial}{\partial\nnweight}\bphi(\xb;\nnweight)\bigg)^{\top}=\big(\bphi(\xb;\nnweight)^{\top},\btheta^{\top}\gb(\xb;\nnweight)\big)^{\top},
\end{align}
where $\gb(\xb;\nnweight)$ is the partial gradient of $\bphi(\xb;\nnweight)$ with respect to $\nnweight$ defined in \eqref{eq:def_grad_phi}, which is a matrix in $\RR^{d\times p}$. Similar to \eqref{eq:def_ntk}, we define $\Hb_{\layerLen+1}$ to be the neural tangent kernel matrix based on all $\layerLen+1$ layers of the neural network $f(\xb;\bbeta)$. Note that by the definition of $\Hb$ in \eqref{eq:def_ntk}, we must have $\Hb_{\layerLen+1}=\Hb+\Bb$ for some positive definite matrix $\Bb\in\RR^{TK\times TK}$. The following lemma shows that the NTK matrix is close to the matrix defined based on the gradients of the neural network on $TK$ data points.
\begin{lemma}[Theorem 3.1 in \citet{arora2019exact}]\label{lemma:NTK_gradient}
Let $\epsilon>0$ and $\delta\in(0,1)$. Suppose the activation function in \eqref{eq:def_network} is ReLU, i.e., $\sigma_l(x)=\max(0,x)$, and the width of the neural network satisfies
\begin{align}\label{eq:condition_width_ntk}
    m\geq\bOmega\bigg(\frac{\layerLen^{14}}{\epsilon^4}\log \bigg(\frac{\layerLen}{\delta}\bigg)\bigg).
\end{align}
Then for any $\xb,\xb'\in\RR^d$ with $\|\xb\|_2=\|\xb'\|_2=1$, with probability at least $1-\delta$ over the randomness of the initialization of the network weight $\nnweight$ it holds that
\begin{align*}
    \bigg|\bigg\la\frac{1}{\sqrt{m}}\frac{\partial f(\bbeta,\xb)}{\partial\bbeta},\frac{1}{\sqrt{m}}\frac{\partial f(\bbeta,\xb')}{\partial\bbeta}\bigg\ra-\Hb_{\layerLen+1}(\xb,\xb')\bigg|\leq\epsilon.
\end{align*}
\end{lemma}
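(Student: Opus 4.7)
The statement is essentially the finite-width NTK concentration result of Arora et al. (2019), so the plan is to follow their architecture: (i) decompose the gradient inner product as a sum of layerwise contributions, (ii) show each layer's empirical kernel concentrates around a deterministic recursion, and (iii) verify that this recursion coincides with the NTK $\Hb_{L+1}(\xb,\xb')$.

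First I would write
\[
\Big\langle\tfrac{1}{\sqrt m}\partial_\bbeta f(\bbeta,\xb),\tfrac{1}{\sqrt m}\partial_\bbeta f(\bbeta,\xb')\Big\rangle
= \sum_{l=1}^{L+1}\Big\langle\tfrac{1}{\sqrt m}\partial_{\Wb_l}f(\bbeta,\xb),\tfrac{1}{\sqrt m}\partial_{\Wb_l}f(\bbeta,\xb')\Big\rangle,
\]
and use the chain rule to factor each layerwise gradient as an outer product of a forward signal $\bh^{(l-1)}(\xb)$ (the post-activation at depth $l-1$) and a backward signal $\bb^{(l)}(\xb)$ (propagated through the remaining layers via $\dot\sigma$ and the transposed weights). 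This reduces the problem to controlling $\tfrac{1}{m}\langle \bh^{(l-1)}(\xb),\bh^{(l-1)}(\xb')\rangle$ and $\tfrac{1}{m}\langle \bb^{(l)}(\xb),\bb^{(l)}(\xb')\rangle$ at Gaussian initialization.

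Next I would set up the forward recursion. Conditioned on the previous layer, $\Wb_l \bh^{(l-1)}$ is a centered Gaussian vector whose coordinate-wise second moment is determined by $\|\bh^{(l-1)}\|_2^2/m$ and whose pairwise second moment between inputs $\xb,\xb'$ is determined by the $2\times2$ covariance $\bLambda^{(l-1)}(\xb,\xb')$ from \eqref{eq:def_ntk}. Passing through ReLU and averaging $m$ coordinates, a Hoeffding/Bernstein argument (or McDiarmid after a clean truncation of $\|\bh^{(l-1)}\|_2$) gives $|\tfrac{1}{m}\langle\bh^{(l)}(\xb),\bh^{(l)}(\xb')\rangle - \bSigma^{(l)}(\xb,\xb')|\le \epsilon_l$ with exponentially small failure probability, for some $\epsilon_l$ that I track explicitly. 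Induction on $l$, combined with a union bound and the $2/m$ ReLU-compatible initialization scaling, propagates these estimates through all $L$ hidden layers while incurring only a polynomial blowup in $L$; this is where the $L^{14}$ factor enters the width requirement. The same strategy, run top-down using the symmetric initialization and an analogous Gaussian concentration for $\dot\sigma(\Wb_l\bh^{(l-1)})\circ(\Wb_{l+1}^\top\bb^{(l+1)})$, yields $|\tfrac{1}{m}\langle\bb^{(l)}(\xb),\bb^{(l)}(\xb')\rangle-\prod_{j=l}^{L}2\mathbb{E}[\dot\sigma(u)\dot\sigma(v)]|\le\epsilon_l'$.

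Finally I would assemble the layerwise pieces: multiplying the forward and backward estimates and summing over $l=1,\dots,L+1$ exactly reproduces the recursion defining $\tilde\bSigma^{(L)}$ plus $\bSigma^{(L)}$, i.e. $\Hb_{L+1}(\xb,\xb')$. Choosing each $\epsilon_l,\epsilon_l'$ of order $\epsilon/L$ and using $m=\tilde\Omega(L^{14}/\epsilon^4)$ to absorb the accumulated error and the sub-exponential tails of the forward-norm fluctuations gives the claimed $\epsilon$-accuracy with probability at least $1-\delta$. The main obstacle is the error accumulation through depth under the nonsmooth ReLU: small perturbations in $\bh^{(l-1)}$ can flip the activation pattern $\dot\sigma(\Wb_l\bh^{(l-1)})$ on a nontrivial fraction of coordinates, so one has to argue that, with high probability over $\Wb_l$, only $\tilde O(m^{3/4})$ coordinates lie in the ``bad'' band near zero and their contribution to both the forward and backward inner products is negligible. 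Controlling this discrete-activation instability simultaneously for $\xb$ and $\xb'$, uniformly across layers, is what forces the strong $L^{14}/\epsilon^4$ width requirement in \eqref{eq:condition_width_ntk}.
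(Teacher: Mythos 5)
This lemma is not proved in the paper at all: it is imported verbatim as Theorem 3.1 of \citet{arora2019exact} and invoked as a black box inside the proof of Lemma \ref{lemma:linearization}, so there is no in-paper argument against which to compare your write-up. What you have done instead is reconstruct the proof of the cited source, and the architecture you describe is the standard (and correct) one: split the gradient inner product into layerwise terms for $l=1,\dots,\layerLen+1$, factor each term into a forward kernel (inner product of post-activations at depth $l-1$) times a backward kernel, prove by induction over depth that the forward kernels concentrate around the $\bSigma^{(l)}$ recursion, and check that the assembled limit reproduces $\Hb_{\layerLen+1}(\xb,\xb')$ built from $\tilde\bSigma^{(\layerLen)}$ and $\bSigma^{(\layerLen)}$. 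Up to unverified constants and exponents in the width requirement, this matches the route taken in \citet{arora2019exact}.

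The one place your sketch has a genuine gap is the backward pass. You present the concentration of $\frac{1}{m}\la b^{(l)}(\xb),b^{(l)}(\xb')\ra$ as ``analogous'' to the forward case, but it is not: the backward vector is built from the \emph{same} weight matrices $\Wb_{l+1},\dots,\Wb_{\layerLen}$ that were already used, and conditioned on, in the forward recursion, so you cannot condition on the previous layer and invoke fresh Gaussianity the way you do going forward. This forward--backward dependence, and not only the activation-pattern instability near zero that you do flag at the end, is the central technical obstacle in the cited proof; it is resolved there by showing that, at the required width, the backward signal is close to one computed with independently resampled weights, and only after that decoupling does your ``bad band of $\tilde O(m^{3/4})$ coordinates'' argument close the induction. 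A complete write-up would need that step made explicit; as written, the backward half of your induction does not follow by symmetry with the forward half.
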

Note that in the above lemma, there is a factor $1/\sqrt{m}$ before the gradient. This is due to the additional $\sqrt{m}$ factor in the definition of the neural network in \eqref{eq:def_network}, which ensures the value of the neural network function evaluated at the initialization is of the order $O(1)$.
\begin{proof}[Proof of Lemma \ref{lemma:linearization}]
Recall that we renumbered the feature vectors $\{\xb_{t,k}\}_{t\in[T],k\in[K]}$ for all arms from round 1 to round $T$ as $\{\xb_i\}_{i=1,\ldots,TK}$. By concatenating the gradients at different inputs and the gradient in \eqref{eq:gradient_beta}, we  define  $\bPsi\in\RR^{T K\times (d+p)}$ as follows.
\begin{align*}
    \bPsi=\frac{1}{\sqrt{m}}\begin{bmatrix}
    \frac{\partial}{\partial\bbeta}\btheta^{\top}\bphi(\xb_{1};\nnweight)\\
    \vdots\\
    \frac{\partial}{\partial\bbeta}\btheta^{\top}\bphi(\xb_{TK};\nnweight)
    \end{bmatrix}
    =\frac{1}{\sqrt{m}}\begin{bmatrix}
     \bphi(\xb_{1};\nnweight^{(0)})^{\top}&\btheta_0^{\top}\gb(\xb_{1};\nnweight^{(0)})\\
     \vdots&\vdots\\
    \bphi(\xb_{i};\nnweight^{(0)})^{\top}&\btheta_0^{\top}\gb(\xb_{i};\nnweight^{(0)})\\
     \vdots&\vdots\\
     \bphi(\xb_{TK};\nnweight^{(0)})^{\top}&\btheta_0^{\top}\gb(\xb_{TK};\nnweight^{(0)})
    \end{bmatrix}.
\end{align*} 
By Applying Lemma \ref{lemma:NTK_gradient}, we know  with probability at least $1-\delta$ it holds that
\begin{align*}
    |\la\bPsi_{j*},\bPsi_{l*}\ra-\Hb_{\layerLen+1}(\xb_{j},\xb_{l})|\leq\epsilon
\end{align*}
for any $\epsilon>0$ as long as the width $m$ satisfies the condition in \eqref{eq:condition_width_ntk}. By applying union bound over all data points $\{\xb_{1},\ldots,\xb_{t},\ldots,\xb_{TK}\}$, we further have
\begin{align*}
    \|\bPsi\bPsi^{\top}-\Hb_{\layerLen+1}\|_F\leq T K\epsilon.
\end{align*}
Note that $\Hb$ is the neural tangent kernel (NTK) matrix  defined in \eqref{eq:def_ntk} and $\Hb_{\layerLen+1}$ is the NTK matrix defined based on all $\layerLen+1$ layers. By Assumption \ref{asp:ntk_pd}, $\Hb$ has a minimum eigenvalue $\lambda_0>0$, which is defined based on the first $\layerLen$ layers of $f$. Furthermore, by the definition of NTK matrix in \eqref{eq:def_ntk}, we know that $\Hb_{\layerLen+1}=\Hb+\Bb$ for some semi-positive definite matrix $\Bb$. Therefore, the NTK matrix $\Hb_{\layerLen+1}$ defined based on all $\layerLen+1$ layers is also positive definite and its minimum eigenvalue is lower bounded by $\lambda_0$. Let $\epsilon=\lambda_0/(2T K)$. By triangle equality we have $\bPsi\bPsi^{\top}\succ\Hb_{\layerLen+1}-\|\bPsi\bPsi^{\top}-\Hb_{\layerLen+1}\|_2\Ib\succ\Hb_{\layerLen+1}-\|\bPsi\bPsi^{\top}-\Hb_{\layerLen+1}\|_F\Ib\succ \Hb_{\layerLen+1}-\lambda_0/2\Ib \succ 1/2\Hb_{\layerLen+1}$, which means that $\bPsi$ is semi-definite positive and thus $\text{rank}(\bPsi)=T K$ since $m>TK$. 

We assume that $\bPsi$ can be decomposed as $\bPsi=\Pb\Db\Qb^{\top}$, where $\Pb\in\RR^{T K\times T K}$ is the eigenvectors of $\bPsi\bPsi^{\top}$ and thus $\Pb\Pb^{\top}=\Ib_{T K}$, $\Db\in\RR^{T K\times T K}$ is a diagonal matrix with the square root of eigenvalues of $\bPsi\bPsi^{\top}$, and $\Qb^{\top}\in\RR^{T K\times (d+p)}$ is the eigenvectors of $\bPsi^{\top}\bPsi$ and thus $\Qb^{\top}\Qb=\Ib_{T K}$. We use $\Qb_1\in\RR^{d\times T K}$ and $\Qb_2\in\RR^{p\times T K}$ to denote the two blocks of $\Qb$ such that $\Qb^{\top}=[\Qb_1^{\top},\Qb_2^{\top}]$. By definition, we have 
\begin{align*}
    \Qb^{\top}\Qb=[\Qb_1^{\top},\Qb_2^{\top}]\begin{bmatrix}
    \Qb_1\\
    \Qb_2
    \end{bmatrix}
    =\Qb_1^{\top}\Qb_1+\Qb_{2}^{\top}\Qb_2=\Ib_{T K}.
\end{align*}
Note that the minimum singular value of $\Qb_1\in\RR^{d\times TK}$ is zero since $d$ is a fixed number and $TK>d$. Therefore, it must hold that $\text{rank}(\Qb_2)=TK$ and thus $\Qb_2^{\top}\Qb_2$ is positive definite. Let $\rb=(r(\xb_{1}),\ldots,r(\xb_{i}),\ldots,r(\xb_{TK}))^{\top}\in\RR^{T K}$ denote the vector of all possible rewards. We further define $\Gb\in\RR^{T K d\times p}$ and $\bPhi\in\RR^{T Kd}$  as follows
\begin{align}\label{eq:def_G_bphi}
    \Gb=\frac{1}{\sqrt{m}}\begin{bmatrix}
    \gb(\xb_{1};\nnweight^{(0)})\\
    \vdots\\
    \gb(\xb_{i};\nnweight^{(0)})\\
    \vdots\\
    \gb(\xb_{TK};\nnweight^{(0)})
    \end{bmatrix},\quad
     \bPhi=
    \begin{bmatrix}
    \bphi(\xb_{1,1};\nnweight_{0})\\
    \vdots\\
    \bphi(\xb_{t,k};\nnweight_{t-1)}\\
    \vdots\\
    \bphi(\xb_{T,K};\nnweight_{T-1})
    \end{bmatrix}.
\end{align}
and  $\bTheta,\bTheta_0\in\RR^{T K\times T K d}$ as follows
\begin{align}\label{eq:def_theta_star_zero}
    \bTheta^*=
    \begin{bmatrix}
    \btheta^{*\top}&&&\\
    &\ddots&&\\
    &&\btheta^{*\top}&&\\
    &&&\ddots&\\
    &&&&\btheta^{*\top}
    \end{bmatrix},
    \quad
    \bTheta_0=
    \begin{bmatrix}
    \btheta_0^{\top}&&&\\
    &\ddots&&\\
    &&\btheta_0^{\top}&&\\
    &&&\ddots&\\
    &&&&\btheta_{0}^{\top}
    \end{bmatrix},
\end{align}
It can be verified that $\bPsi=\Pb\Db[\Qb_1^{\top},\Qb_2^{\top}]$ and $\Pb\Db\Qb_2^{\top}=\bTheta_0\Gb$. Note that we have $\Qb_2^{\top}\Qb_2$ is positive definite by Assumption \ref{asp:ntk_pd}, which corresponds to the neural tangent kernel matrix defined on the first $\layerLen$ layers.  Then we can define $\nnweight^*$ as follows
\begin{align}
    \nnweight^*=\nnweight^{(0)}+1/\sqrt{ m}\Qb_2(\Qb_2^{\top}\Qb_2)^{-1}\Db^{-1}\Pb^{\top}(\rb-\bTheta^*\bPhi).
\end{align}
We can verify that
\begin{align*}
    \bTheta^*\bPhi+\sqrt{ m }\Pb\Db\Qb_2^{\top}(\nnweight^*-\nnweight^{(0)})=\rb.
\end{align*}
On the other hand, we have
\begin{align*}
    \|\nnweight^*-\nnweight^{(0)}\|_2^2&\leq 1/{ m}(\rb-\bTheta^*\bPhi)^{\top}\Pb\Db^{-1}(\Qb_2^{\top}\Qb_2)^{-1}\Db^{-1}\Pb^{\top}(\rb-\bTheta^*\bPhi)\\
    &\leq 1/{ m}(\rb-\bTheta^*\bPhi)^{\top}\Hb^{-1}(\rb-\bTheta^*\bPhi),
\end{align*}
which completes the proof.
\end{proof}


\subsection{Proof of Lemma \ref{lemma:gradient_function_bound}}
Note that we can view the output of the last hidden layer $\bphi(\xb;\nnweight)$ defined in \eqref{eq:def_hidden_layer} as a vector-output neural network with weight parameter $\nnweight$. 
The following lemma shows that the output of the neural network $\bphi$ is bounded at the initialization. 
\begin{lemma}[Lemma 4.4 in \cite{cao2019generalization2}]\label{lemma:output_nn_bound}
Let $\delta\in(0,1)$, and the width of the neural network satisfy  $m\geq C_0\layerLen\log(T K\layerLen/\delta)$. Then for all $t\in[T]$, $k\in[K]$ and $j\in[d]$, we have $|\phi_j(\xb_{t,k};\nnweight^{(0)})|\leq C_1\sqrt{\log(TK/\delta)}$ with probability at least $1-\delta$, where $\nnweight^{(0)}$ is the initialization of the neural network. 
\end{lemma}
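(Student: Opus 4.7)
This is Lemma~4.4 from \citet{cao2019generalization2} applied to the vector-valued sub-network $\bphi(\cdot;\nnweight^{(0)})$. The plan is to adapt the standard two-step NTK-style argument: first propagate norm bounds through the hidden layers under the Gaussian initialization, and then apply a one-dimensional Gaussian tail bound to the final readout coordinate.

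\textbf{Step 1: Hidden-layer norm preservation.} Let $\xb^{(0)} = \xb_{t,k}$, which has unit norm by Assumption~\ref{asp:nondegen}, and define $\xb^{(\ell)} = \sigma(\Wb_\ell \xb^{(\ell-1)})$ for $\ell = 1, \ldots, \layerLen-1$. I would show by induction on $\ell$ that $\|\xb^{(\ell)}\|_2 \in [1/2, 2]$ with failure probability at most $\delta/(4TK\layerLen)$ per layer. Conditional on the randomness up through layer $\ell-1$, the pre-activation $\Wb_\ell \xb^{(\ell-1)}$ is a vector of i.i.d.\ Gaussians with variance $c\|\xb^{(\ell-1)}\|_2^2 / m$ (the constant $c$ determined by the block initialization specified in Section~\ref{sec:algorithm}). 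Its squared norm is therefore a scaled chi-squared with $m$ degrees of freedom, so a Bernstein-type tail yields deviation $\epsilon$ with probability at most $\exp(-\Omega(m\epsilon^2))$. The ReLU halves the expected squared norm, and the initialization variance is chosen precisely so that $\EE\|\xb^{(\ell)}\|_2^2 = \|\xb^{(\ell-1)}\|_2^2$. Taking $\epsilon$ a small absolute constant, the width hypothesis $m \geq C_0 \layerLen \log(TK\layerLen/\delta)$ is exactly what is needed to close the induction after a union bound over $\ell \in [\layerLen-1]$ and $(t,k) \in [T]\times[K]$.

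\textbf{Step 2: Readout tail bound.} Conditioning on the event in Step~1, fix any $j \in [d]$ and write
\begin{align*}
\phi_j(\xb_{t,k};\nnweight^{(0)}) = \sqrt{m}\,\sigma\bigl(\Wb_\layerLen[j,:]^\top \xb^{(\layerLen-1)}\bigr).
\end{align*}
Because the row $\Wb_\layerLen[j,:]$ is independent of $\xb^{(\layerLen-1)}$ and has entries of variance $\Theta(1/m)$, the pre-activation is conditionally Gaussian with variance $\Theta(\|\xb^{(\layerLen-1)}\|_2^2 / m) = \Theta(1/m)$. Multiplying by $\sqrt{m}$ turns this into an $O(1)$-sub-Gaussian scalar, so a standard one-sided Gaussian tail bound gives $|\phi_j(\xb_{t,k};\nnweight^{(0)})| \leq C_1 \sqrt{\log(TK/\delta)}$ with failure probability at most $\delta/(2TKd)$. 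A union bound over $(t,k,j) \in [T]\times[K]\times[d]$, together with the Step~1 event and absorption of the $\log d$ overhead into the absolute constant $C_1$, completes the proof.

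\textbf{Main obstacle.} The only delicate point is the bookkeeping in Step~1: one must carefully track the conditional independence of $\Wb_\ell$ from the layer-$(\ell-1)$ activations and verify that the particular initialization variance (tailored here to the $\bigl[\begin{smallmatrix}\Wb & 0\\ 0 & \Wb\end{smallmatrix}\bigr]$ block structure together with the ReLU) preserves expected squared norm exactly, so that the induction does not drift across the $\layerLen$ layers. Once this is in place, everything reduces to textbook chi-squared concentration plus a scalar Gaussian tail bound.
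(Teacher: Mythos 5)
The paper does not prove this lemma at all: it is imported verbatim as Lemma~4.4 of \citet{cao2019generalization2}, so there is no in-paper argument to compare against. Your sketch reconstructs the standard proof from that reference --- chi-squared concentration to propagate $\|\xb^{(\ell)}\|_2$ through the hidden layers, then a conditional Gaussian tail for the readout coordinate --- and the overall route is the right one.

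Two points deserve attention. First, the step you flag as ``delicate'' is in fact where your sketch as written does not close: if each layer only preserves the squared norm up to a multiplicative factor $1\pm\epsilon$ with $\epsilon$ a ``small absolute constant,'' the norm can drift by $(1+\epsilon)^{\layerLen}$ across the depth and leave $[1/2,2]$. You need per-layer accuracy $\epsilon = O(1/\layerLen)$ (or a telescoping argument on $\log\|\xb^{(\ell)}\|_2^2$), and that is what actually consumes the width hypothesis; Bernstein gives per-layer relative error $O(\sqrt{\log(TK\layerLen/\delta)/m})$, so the induction closes when $m \gtrsim \layerLen^2\log(TK\layerLen/\delta)$ rather than the $\layerLen\log(\cdot)$ stated --- a discrepancy inherited from how the lemma is quoted, not a flaw unique to your argument, but one you should not paper over with ``$\epsilon$ an absolute constant.'' Second, in \emph{this} paper's setting your Step~2 is vacuous rather than Gaussian: the block initialization $\Wb_l = \bigl[\begin{smallmatrix}\Wb & 0\\ 0 & \Wb\end{smallmatrix}\bigr]$ together with Assumption~\ref{asp:nondegen} forces the two halves of $\xb^{(\layerLen-1)}$ to be identical, and $\Wb_{\layerLen} = [\Vb, -\Vb]$ then makes the last-layer pre-activation exactly zero, so $\bphi(\xb_{t,k};\nnweight^{(0)}) = \zero$ deterministically (the paper states this right after Assumption~\ref{asp:nondegen}). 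Your conditional-Gaussian tail bound is the correct argument for the generic initialization of the cited Cao--Gu lemma, but for the symmetric initialization actually used here the readout is degenerate and the bound holds trivially with $C_1 = 0$.
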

In addition, in a smaller neighborhood of the initialization, the gradient of the neural network $\bphi$ is uniformly bounded.
\begin{lemma}[Lemma B.3 in \cite{cao2019generalization2}]\label{lemma:bounded_gradient}
Let $\omega\leq C_0\layerLen^{-6}(\log m)^{-3}$ and $\nnweight\in\BB(\nnweight_0,\omega)$. 
Then for all $t\in[T]$, $k\in[K]$ and $j\in[d]$, the gradient of the neural network $\bphi$ defined in \eqref{eq:def_hidden_layer} satisfies $\|\nabla_{\nnweight} \phi_j(\xb_{t,k};\nnweight)\|_2\leq C_1\sqrt{\layerLen m}$ with probability at least $1-TK\layerLen^2\exp(-C_2 m\omega^{2/3}\layerLen)$. 
\end{lemma}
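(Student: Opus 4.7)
The plan is to write $\nabla_{\nnweight}\phi_j(\xb;\nnweight)$ as a product of weight matrices and diagonal ReLU activation masks via backpropagation, bound the factors at initialization using standard Gaussian concentration, and then extend the bound to the ball $\BB(\nnweight_0,\omega)$ by controlling how much the operator norms and the ReLU sign patterns can change. Concretely, let $\xb^{(0)}=\xb$, $\zb^{(l)}=\Wb_l\xb^{(l-1)}$, $\xb^{(l)}=\sigma(\zb^{(l)})$, and $\Db^{(l)}=\text{diag}(\mathbf{1}\{\zb^{(l)}\geq 0\})\in\RR^{m\times m}$, so that $\bphi(\xb;\nnweight)=\sqrt{m}\,\xb^{(\layerLen)}$. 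By the chain rule, the derivative of the $j$-th output with respect to the $l$-th hidden layer weights is the rank-one matrix
$$\nabla_{\Wb_l}\phi_j(\xb;\nnweight)=\sqrt{m}\,\big(\Db^{(l)}\Wb_{l+1}^{\top}\Db^{(l+1)}\cdots\Wb_{\layerLen}^{\top}\Db^{(\layerLen)}\eb_j\big)\,(\xb^{(l-1)})^{\top},$$
whose Frobenius norm equals $\sqrt{m}\cdot\|\bb_l\|_2\cdot\|\xb^{(l-1)}\|_2$, where $\bb_l$ denotes the backward vector in parentheses. Since $\|\nabla_{\nnweight}\phi_j\|_2^2=\sum_{l=1}^{\layerLen}\|\nabla_{\Wb_l}\phi_j\|_F^2$, it suffices to show $\|\xb^{(l)}\|_2=O(1)$ and $\|\bb_l\|_2=O(1)$ uniformly in $l$.

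At the initialization $\nnweight_0$, standard Gaussian concentration (cf.\ the proofs in \citet{allen2019convergence} and \citet{cao2019generalization2}, of which Lemma \ref{lemma:output_nn_bound} above is one instance) yields $\|\Wb_l^{(0)}\|_2\leq C$ for all $l$ with high probability; combined with $\|\xb\|_2=1$ and the $1$-Lipschitz property of ReLU, an induction on $l$ gives $\|\xb^{(l)}\|_2\leq C'$, and a symmetric backward induction starting from $\eb_j$ gives $\|\bb_l\|_2\leq C''$. Hence $\|\nabla_{\nnweight}\phi_j(\xb;\nnweight_0)\|_2\lesssim\sqrt{\layerLen m}$. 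To extend this to arbitrary $\nnweight\in\BB(\nnweight_0,\omega)$, note that $\|\Wb_l-\Wb_l^{(0)}\|_F\leq\omega\ll 1$, so the perturbed operator norms remain $O(1)$ and the forward vectors $\xb^{(l)}$ stay $O(1)$. The only delicate issue is the activation patterns $\Db^{(l)}$: an anti-concentration argument on the initialization pre-activations $\zb^{(l)}$ (each coordinate is approximately Gaussian with constant variance under unit-norm input) shows that, for $\omega\leq C_0\layerLen^{-6}(\log m)^{-3}$, at most $O(m\omega^{2/3})$ neurons per layer flip their sign, with per-data-point failure probability $\layerLen^2\exp(-C m\omega^{2/3}\layerLen)$. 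The unflipped neurons preserve both the forward and backward bounds up to a constant factor, so $\|\nabla_{\nnweight}\phi_j(\xb;\nnweight)\|_2\leq C_1\sqrt{\layerLen m}$ throughout the ball; a union bound over $t\in[T]$ and $k\in[K]$ yields the stated probability $1-TK\layerLen^2\exp(-C_2 m\omega^{2/3}\layerLen)$.

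The main obstacle is the sign-flip step: one must simultaneously (i) establish anti-concentration of each pre-activation $\zb^{(l)}_i$ at initialization (which is Gaussian conditional on $\xb^{(l-1)}$, with variance $\Theta(\|\xb^{(l-1)}\|_2^2/m)$), (ii) count the total number of flips by a Chernoff-type bound that only loses a polynomial factor in $\layerLen$, and (iii) verify that after these flips the backward product $\Db^{(l)}\Wb_{l+1}^{\top}\Db^{(l+1)}\cdots\Wb_{\layerLen}^{\top}\Db^{(\layerLen)}$ still has bounded operator norm. This is precisely the content of Lemma B.3 in \citet{cao2019generalization2}; the statement for the vector-valued output $\bphi$ here follows by applying their scalar result coordinate-by-coordinate, since fixing $\eb_j$ as the output weight reduces $\phi_j(\xb;\nnweight)$ to a standard scalar ReLU network output of the form they analyze.
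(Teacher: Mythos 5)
The paper offers no proof of this statement: it is imported verbatim as Lemma B.3 of \citet{cao2019generalization2}, and your sketch is a faithful reconstruction of the backpropagation-plus-perturbation argument used there and in \citet{allen2019convergence} (rank-one layer gradients, $O(1)$ forward/backward vectors at initialization, sign-flip counting of order $m\omega^{2/3}$ inside the ball, union bound over the $TK$ inputs). The only step you state too casually is the backward bound $\|\bb_l\|_2\leq C''$: a naive induction multiplying the operator norms $\|\Wb_{l'}^{\top}\Db^{(l')}\|_2=O(1)$ would compound to $C^{\layerLen-l}$, and the cited works avoid this by exploiting the independence of each fresh Gaussian layer from the backward direction so that $\|\Db\Wb^{\top}\vb\|_2\approx\|\vb\|_2$ layer by layer — but since you explicitly defer this to the cited lemmas, the proposal is consistent with the paper's treatment.
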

The next lemma provides an upper bound on the gradient of the squared loss function defined in \eqref{eq:def_loss_nn}. Note that our definition of the loss function is slightly different from that in \cite{allen2019convergence} due to the output layer $\btheta_i$ and thus there is an additional term on the upper bound of $\|\btheta_i\|_2$ for all $i\in[T]$.
\begin{lemma}[Theorem 3 in \cite{allen2019convergence}]\label{lemma:gradient_upper_bound}
Let $\omega\leq C_0\delta^{3/2}/(T^{9/2}\layerLen^6\log^3m)$. For all $\nnweight\in\BB(\nnweight^{(0)},\omega)$, with probability at least $1-\exp(-C_{1}m\omega^{2/3}\layerLen)$ over the randomness of $\nnweight^{(0)}$, it holds that
\begin{align*}
    \|\nabla\cL(\nnweight)\|_2^2\leq\frac{C_{2}T m\cL(\nnweight)\sup_{i=1,\ldots,\bufferLen}\|\btheta_i\|_2^2}{d}.
\end{align*}
\end{lemma}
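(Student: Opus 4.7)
The plan is to reduce the claim to Theorem 3 of \cite{allen2019convergence} by recognizing that the loss $\cL_\epochNum(\nnweight)=\sum_{i=1}^{\epochNum\bufferLen}(\btheta_i^\top\bphi(\xb_{i,a_i};\nnweight)-\empR_i)^2$ is structurally the standard squared-loss used in the convergence theory of overparameterized ReLU networks, except that the scalar output associated with the $i$-th sample is formed via a sample-dependent linear head $\btheta_i$ acting on the vector-valued representation $\bphi(\cdot;\nnweight)$. Since we only need an upper bound on $\|\nabla\cL_\epochNum\|_2^2$, the $\btheta_i$'s can be pulled out of the relevant norms at the cost of the stated $\sup_i\|\btheta_i\|_2^2$ factor; a corresponding lower bound would require more care, but is not needed here.

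Concretely, I would first write the gradient in closed form via the chain rule:
\begin{align*}
\nabla_{\nnweight}\cL_\epochNum(\nnweight)=2\sum_{i=1}^{\epochNum\bufferLen}\bigl(\btheta_i^\top\bphi(\xb_{i,a_i};\nnweight)-\empR_i\bigr)\,\gb(\xb_{i,a_i};\nnweight)^\top\btheta_i,
\end{align*}
where $\gb(\xb;\nnweight)=\nabla_{\nnweight}\bphi(\xb;\nnweight)$. Treating each summand as a product $a_i b_i$ with $a_i=|\btheta_i^\top\bphi(\xb_{i,a_i};\nnweight)-\empR_i|$ and $b_i=\|\gb(\xb_{i,a_i};\nnweight)^\top\btheta_i\|_2$, Cauchy--Schwarz gives
\begin{align*}
\|\nabla_{\nnweight}\cL_\epochNum(\nnweight)\|_2^2\leq 4\Bigl(\sum_{i=1}^{T}a_i^2\Bigr)\Bigl(\sum_{i=1}^{T}b_i^2\Bigr)\leq 4\,\cL_\epochNum(\nnweight)\cdot\sum_{i=1}^{T}\|\gb(\xb_{i,a_i};\nnweight)\|_2^2\,\|\btheta_i\|_2^2,
\end{align*}
where I have used $\|\gb^\top\btheta_i\|_2\leq\|\gb\|_2\|\btheta_i\|_2$. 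It then suffices to establish a deterministic (high-probability) bound $\|\gb(\xb;\nnweight)\|_2^2\leq C m/d$ for all $\nnweight\in\BB(\nnweight^{(0)},\omega)$ and all $\xb\in\{\xb_{i,k}\}$; pulling out $\sup_i\|\btheta_i\|_2^2$, summing over $i$, and absorbing constants delivers the stated inequality.

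The one real piece of work is the per-point operator-norm bound on $\gb(\xb;\nnweight)$, which is where the $1/d$ scaling in the denominator arises. The coordinate-wise estimate in Lemma \ref{lemma:bounded_gradient} gives $\|\nabla_{\nnweight}\phi_j(\xb;\nnweight)\|_2\leq C\sqrt{Lm}$, which only yields $\|\gb\|_F^2\leq C\,dLm$; this is too weak by a factor of $d^2$. To get the correct rate one has to exploit the block-diagonal initialization of $\Wb_1,\dots,\Wb_{L-1}$ together with the $[\Vb,-\Vb]$ structure of $\Wb_L$ from Section \ref{sec:algorithm}: these ensure that at initialization the $d$ rows of $\gb(\xb;\nnweight^{(0)})$ live in (nearly) orthogonal subspaces of the parameter space, so that acting on any fixed $\btheta_i\in\RR^d$ averages the per-coordinate contributions, saving a factor of $d$; the usual perturbation argument of \cite{allen2019convergence,cao2019generalization2} then transports this estimate to any $\nnweight\in\BB(\nnweight^{(0)},\omega)$ at the cost of an exponentially small failure probability $\exp(-C_1 m\omega^{2/3}L)$, provided $\omega\leq C_0\delta^{3/2}/(T^{9/2}L^6\log^3 m)$.

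The main obstacle is therefore not the outer Cauchy--Schwarz step but the correct $m/d$ (as opposed to $m$) bound on the operator norm $\|\gb(\xb;\nnweight)\|_2$. The cleanest route, and the one I would take, is to invoke the gradient bound from Theorem 3 of \cite{allen2019convergence} verbatim for each fixed direction $\btheta_i/\|\btheta_i\|_2$: that theorem already provides $\|\nabla_{\nnweight}(\vb^\top\bphi(\xb;\nnweight))\|_2^2\leq C m/d$ for any unit $\vb\in\RR^d$ uniformly over the neighborhood $\BB(\nnweight^{(0)},\omega)$. Substituting $\vb=\btheta_i/\|\btheta_i\|_2$ into the Cauchy--Schwarz bound above, and summing over $i=1,\dots,\epochNum\bufferLen\leq T$, produces the claimed inequality with constants matching those of \cite{allen2019convergence}.
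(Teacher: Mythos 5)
Your proposal is correct and matches the paper's treatment: the paper does not actually prove this lemma but imports it as Theorem 3 of \cite{allen2019convergence}, remarking only that the sample-dependent output layers $\btheta_i$ contribute the extra $\sup_i\|\btheta_i\|_2^2$ factor. Your Cauchy--Schwarz decomposition, which isolates that factor and defers the $m/d$ operator-norm control of $\gb(\xb;\nnweight)$ to the cited theorem, is exactly the reduction the paper has in mind.
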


\begin{proof}[Proof of Lemma \ref{lemma:gradient_function_bound}]
Fix the epoch number $\epochNum$ and we omit it in the subscripts in the rest of the proof when no confusion arises. Recall that $\nnweight^{(s)}$ is the $s$-th iterate in Algorithm \ref{alg:update_nn}. Let $\delta>0$ be any constant. Let $\omega$ be defined as follows.
\begin{align}\label{eq:choose_omega}
    \omega=\delta^{3/2}m^{-1/2}T^{-9/2}\layerLen^{-6}\log^{-3}(m).
\end{align}
We will prove by induction that with probability at least $1-\delta$  the following statement holds for all $s=0,1,\ldots,\nnIter$
\begin{align}\label{eq:phi_w_induction}
    \phi_j(\xb;\nnweight^{(s)})\leq C_0\sum_{h=0}^{s}\frac{ \sqrt{\log(TK/\delta)}}{h+1}, \quad\text{for }\forall j\in[d]; \text{ and } \|\nnweight_{\epochNum}^{(s)}-\nnweight^{(0)}\|\leq\omega.
\end{align}
First note that \eqref{eq:phi_w_induction} holds trivially  when $s=0$ due to Lemma \ref{lemma:output_nn_bound}. Now we assume that \eqref{eq:phi_w_induction} holds for all $j=0,\ldots,s$. The loss function in \eqref{eq:def_loss_nn} can be bounded  as follows.
\begin{align*}
    \cL(\nnweight^{(j)})=\sum_{i=1}^{\epochNum\bufferLen}(\btheta_i^{\top}\bphi(\xb_i;\nnweight^{(j)})-\hat r_i)^2\leq \sum_{i=1}^{\epochNum\bufferLen}2(\|\btheta_i\|_2^2\cdot\|\bphi(\xb_i;\nnweight^{(j)})\|_2^2+1).
\end{align*}
By the update rule of $\btheta_t$, we have
\begin{align}\label{eq:bound_theta_t}
    \|\btheta_t\|_2=\bigg\|\bigg(\lambda\Ib+\sum_{i=1}^t\bphi(\xb_i;\nnweight_{i-1})\bphi(\xb_i;\nnweight_{i-1})^{\top}\bigg)^{-1}\sum_{i=1}^{t}\bphi(\xb_i;\nnweight_{i-1})\hat\rb\bigg\|_2\leq 2d,
\end{align}
where the inequality is due to Lemma \ref{lemma:A_inverse_sum_phi}, which combined with \eqref{eq:phi_w_induction} immediately implies
\begin{align}\label{eq:upper_bound_square_loss}
    \cL(\nnweight^{(j)})\leq C_{1}Td^{3}\log(TK/\delta)\bigg(\sum_{h=0}^j \frac{1}{h+1}\bigg)^2\leq C_{1}Td^{3}\log(TK/\delta)\log^2\nnIter.
\end{align}
Substituting \eqref{eq:bound_theta_t} and \eqref{eq:upper_bound_square_loss} into the inequality in Lemma \ref{lemma:gradient_upper_bound}, we also have
\begin{align}\label{eq:bounding_gradient_wt}
    \big\|\nabla\cL\big(\nnweight^{(j)}\big)\big\|_2\leq C_{2}\sqrt{dT m\cL(\nnweight^{(j)})}\leq C_{3}d^2T\log(\nnIter)\sqrt{m\log(TK/\delta)}.
\end{align}
Now we consider $\nnweight^{(s+1)}$. By triangle inequality we have
\begin{align}\label{eq:iterate_bound_induction}
    \big\|\nnweight^{(s+1)}-\nnweight^{(0)}\big\|_2&\leq\sum_{j=0}^{s}\big\|\nnweight^{(j+1)}-\nnweight^{(j)}\big\|_2\notag\\
    &=\sum_{j=0}^{s}\eta\big\|\nabla\cL\big(\nnweight^{(j)}\big)\big\|_2\notag\\
    &\leq \sum_{j=0}^s\eta d^{2}T\log(\nnIter)\sqrt{m\log(T K/\delta)},
\end{align}
where the last inequality is due to \eqref{eq:bounding_gradient_wt}.
If we choose the step size $\eta_{\epochNum}$ in the $\epochNum$-th epoch such that
\begin{align}
    \eta\leq\frac{\omega}{d^2T\nnIter\log(\nnIter)\sqrt{m\log(T K/\delta)}},
\end{align}
then we have $\|\nnweight_{\epochNum}^{(s+1)}-\nnweight^{(0)}\|_2\leq\omega$. Note that the choice of $m,\omega$ satisfies the condition in Lemma~\ref{lemma:local_linear}. Thus we know $\phi_j(\xb;\nnweight)$ is almost linear in $\nnweight$, which leads to 
\begin{align}\label{eq:decop_phij_local_linear}
    |\phi_j(\xb;\nnweight^{(s+1)})|&\leq|\phi_j(\xb;\nnweight^{(s)})+\la\nabla\phi_j(\xb;\nnweight^{(s)}),\nnweight^{(s+1)}-\nnweight^{(s)}\ra|+C_{5}\omega^{4/3}\layerLen^{3}d^{-1/2}\sqrt{m\log m}\notag\\
    &\leq \sum_{h=0}^{s}\frac{C\sqrt{\log(TK/\delta)}}{h+1}+\eta\sqrt{dm}\|\nabla\cL(\nnweight^{(s)})\|_2 +2C_{5}\omega^{4/3}\layerLen^{3}d^{-1/2}\sqrt{m\log m}\notag\\
    &\leq \sum_{h=0}^{s}\frac{C_{0}\sqrt{\log(TK/\delta)}}{h+1}+C_{3}\eta\sqrt{dm}\sqrt{CT^2 d^4 m\log(TK/\delta)}\log\nnIter \notag\\
    &\qquad+2C_{5}\omega^{4/3}\layerLen^{3}d^{-1/2}\sqrt{m\log m}\notag\\
    &=\sum_{h=0}^{s}\frac{C_0\sqrt{\log(TK/\delta)}}{h+1}+\frac{\omega\sqrt{dm}}{\nnIter}+2C_{5}\omega^{4/3}\layerLen^{3}d^{-1/2}\sqrt{m\log m},
\end{align}
where in the second inequality we used the induction hypothesis \eqref{eq:phi_w_induction},   Cauchy-Schwarz inequality and Lemma \ref{lemma:bounded_gradient}, and the third inequality is due to  \eqref{eq:bounding_gradient_wt}. Note that the definition of $\omega$ in \eqref{eq:choose_omega} ensures that $\omega\sqrt{dm}<1/2$ and $\omega^{4/3}\layerLen^{3}d^{-1/2}\sqrt{m\log m}\leq m^{-1/6}T^{-6}\layerLen^{-5}d^{-1/2}\sqrt{\log m}\leq 1/\nnIter$ as long as $m\geq \nnIter^6$. Plugging these two upper bounds back into \eqref{eq:decop_phij_local_linear} finishes the proof of \eqref{eq:phi_w_induction}. 

Note that for any $t\in[T]$, we have $\nnweight_t=\nnweight_{\epochNum}^{(\nnIter)}$ for some $\epochNum=1,2,\ldots$. Since we have $\nnweight_t\in\BB(\nnweight,\omega)$, the gradient $\gb(\xb;\nnweight^{(0)})$ can be directly bounded by Lemma \ref{lemma:bounded_gradient}, which implies $\|\gb(\xb;\nnweight^{(0)})\|_F\leq C_{6}\sqrt{d\layerLen m}$. Applying \eqref{eq:phi_w_induction} with $s=\nnIter$, we have the following bound of the neural network function $\bphi(\xb;\nnweight_{\epochNum}^{(\nnIter)})=\bphi(\xb;\nnweight_t)$ for all $t$ in the $\epochNum$-th epoch
\begin{align*}
    \|\bphi(\xb;\nnweight_t)\|_2\leq C_{0}\sqrt{d\log(\nnIter)\log(TK/\delta)},
\end{align*}
which completes the proof. In this proof, $\{C_j>0\}_{j=0,\ldots,6}$ are constants independent of problem parameters.
\end{proof}

\subsection{Proof of Lemma \ref{lemma:confidence_bound}}
The following lemma characterizes the concentration property of self-normalized martingales. 
\begin{lemma}[Theorem 1 in \cite{abbasi2011improved}]\label{lemma:self_normalized_martingale}
Let $\{\xi\}_{t=1}^{\infty}$ be a real-valued stochastic process and $\{\xb_t\}_{t=1}^{\infty}$ be a stochastic process in $\RR^d$. Let $\cF_t=\sigma(\xb_1,\ldots,\xb_{t+1},\xi-1,\ldots,\xi_t)$ be a $\sigma$-algebra such that $\xb_t$ and $\xi_t$ are $\cF_{t-1}$-measurable.  Let $\Ab_t=\lambda\Ib+\sum_{s=1}^{t}\xb_s\xb_s^{\top}$ for some constant $\lambda>0$ and $S_t=\sum_{s=1}^{t}\xi_s\xb_i$. If we assume $\xi_t$ is $\nu$-subGaussian conditional on $\cF_{t-1}$, then for any $\eta\in(0,1)$, with probability at least $1-\delta$, we have
\begin{align*}
  \|S_t\|_{\Ab_{t}^{-1}}^2\leq 2\nu^2\log\bigg(\frac{\det(\Ab_t)^{1/2}\det(\lambda\Ib)^{-1/2}}{\delta}\bigg).
\end{align*}
\end{lemma}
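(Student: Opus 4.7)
The plan is to use the classical \emph{method of mixtures} together with the supermartingale construction of de la Peña, to convert a family of exponential supermartingales (one per direction $\blambda\in\RR^d$) into a single scalar supermartingale whose deviation inequality contains exactly the Mahalanobis norm $\|S_t\|_{\Ab_t^{-1}}^2$ and the determinant factor $\det(\Ab_t)$.

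First I would fix a deterministic vector $\blambda\in\RR^d$ and define
\[
    M_t^{\blambda}=\exp\bigg(\sum_{s=1}^{t}\Big[\tfrac{1}{\nu}\blambda^{\top}\xb_s\,\xi_s-\tfrac{1}{2}(\blambda^{\top}\xb_s)^2\Big]\bigg).
\]
The $\nu$-subGaussian assumption on $\xi_s$ conditional on $\cF_{s-1}$ gives $\EE[\exp(\tfrac{1}{\nu}\blambda^{\top}\xb_s\xi_s)\mid\cF_{s-1}]\le\exp(\tfrac{1}{2}(\blambda^{\top}\xb_s)^2)$ because $\xb_s$ is $\cF_{s-1}$-measurable, so a direct computation shows $\EE[M_t^{\blambda}\mid\cF_{t-1}]\le M_{t-1}^{\blambda}$, i.e.\ $M_t^{\blambda}$ is a nonnegative supermartingale with $\EE[M_t^{\blambda}]\le 1$.

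Next I would mix over $\blambda$ by putting a Gaussian prior $\blambda\sim N(\zero,\nu^2\lambda^{-1}\Ib)$ and defining the mixed process
\[
    \bar M_t=\int_{\RR^d} M_t^{\blambda}\,\dd\mu(\blambda).
\]
By Fubini, $\bar M_t$ remains a nonnegative supermartingale with $\EE[\bar M_t]\le 1$. The integrand is Gaussian in $\blambda$, so completing the square in the exponent (the quadratic form becomes $\tfrac{1}{2}\blambda^{\top}(\Ab_t/\nu^2)\blambda-\tfrac{1}{\nu^2}\blambda^{\top}S_t$ after absorbing the prior's precision $\lambda/\nu^2\Ib$) and evaluating the resulting Gaussian integral gives
\[
    \bar M_t=\bigg(\frac{\det(\lambda\Ib)}{\det(\Ab_t)}\bigg)^{1/2}\exp\bigg(\frac{1}{2\nu^2}\,S_t^{\top}\Ab_t^{-1}S_t\bigg).
\]
This is the critical calculation and is where the determinant ratio appears naturally; the matrix-determinant lemma is not needed, only the standard Gaussian normalizing constant.

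Finally I would apply Markov's inequality (or, to make the bound uniform in $t$, a maximal inequality for nonnegative supermartingales of Ville's type) to $\bar M_t$ at level $1/\delta$: with probability at least $1-\delta$, $\bar M_t\le 1/\delta$ for all $t$, which after taking logarithms and multiplying through by $2\nu^2$ rearranges to
\[
    \|S_t\|_{\Ab_t^{-1}}^2\le 2\nu^2\log\bigg(\frac{\det(\Ab_t)^{1/2}\det(\lambda\Ib)^{-1/2}}{\delta}\bigg),
\]
which is the claim. The main obstacle I anticipate is the measurability/Fubini justification for the mixing step, since $\xb_t$ is itself random and $\cF_{t-1}$-adapted; the clean way to handle this is to verify that the integrand $M_t^{\blambda}$ is jointly measurable in $(\blambda,\omega)$ and nonnegative, which legitimizes the interchange of conditional expectation and the $\mu$-integral, and that no stopping-time hypothesis is needed because $\bar M_t$ is a supermartingale on the original filtration.
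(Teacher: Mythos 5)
The paper does not actually prove this lemma---it is imported verbatim as Theorem~1 of \cite{abbasi2011improved}---and your method-of-mixtures argument (a directional exponential supermartingale, a Gaussian mixture whose normalizing constant produces the determinant ratio, and Ville's maximal inequality) is precisely the proof given in that reference, so your approach coincides with the source. One small bookkeeping point: with your normalization of the supermartingale, which already divides $\xi_s$ by $\nu$ so that the accumulated quadratic term is $\tfrac{1}{2}\sum_s(\text{direction}^{\top}\xb_s)^2$ with no $\nu$, the Gaussian mixing prior should have covariance $\lambda^{-1}\Ib$ rather than $\nu^{2}\lambda^{-1}\Ib$, so that the completed-square precision is exactly $\Ab_t$ and the normalizing constant yields $\big(\det(\lambda\Ib)/\det(\Ab_t)\big)^{1/2}$ as claimed.
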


\begin{proof}[Proof of Lemma \ref{lemma:confidence_bound}]
Let $\bPhi_{t}=[\bphi(\xb_{1,a_1};\nnweight_0),\ldots,\bphi(\xb_{t,a_t};\nnweight_{t-1})]\in\RR^{d\times t}$ be the collection of  feature vectors of the chosen arms up to time $t$ and $\hat\rb_t=(\empR_1,\ldots,\empR_t)^{\top}$ be the concatenation of all received rewards. According to Algorithm \ref{alg:deepUCB}, we have $\Ab_t=\lambda\Ib+\bPhi_t\bPhi_t^{\top}$ and thus
\begin{align*}
    \btheta_{t}=\Ab_{t}^{-1}\bbb_{t}=(\lambda\Ib+\bPhi_t\bPhi_t^{\top})^{-1}\bPhi_t\hat\rb_t.
\end{align*}
By Lemma \ref{lemma:linearization}, the underlying reward generating function $r_t=r(\xb_{t,a_t})=\EE[\empR(\xb_{t,a_t})|\xb_{t,a_t}]$ can be rewritten as
\begin{align*}
    r_t=\la\btheta^{*},\bphi(\xb_{t,a_t};\nnweight_{t-1})\ra+\btheta_0^{\top}\gb(\xb_{t,a_t};\nnweight^{(0)})(\nnweight^*-\nnweight^{(0)}).
\end{align*}
By the definition of the reward in \eqref{eq:nn_model} we have $\hat r_t=r_t+\xi_t$. Therefore, it holds that
\begin{align*}
    \btheta_{t}&=\Ab_t^{-1}\bPhi_t\bPhi_t^{\top}\btheta^*+\Ab_t^{-1}\sum_{s=1}^t \bphi(\xb_{s,a_s};\nnweight_{s-1})(\btheta_0^{\top}\gb(\xb_{s,a_s};\nnweight^{(0)})(\nnweight^*-\nnweight^{(0)})+\xi_s)\\
    &=\btheta^*-\lambda\Ab_t^{-1}\btheta^*+\Ab_t^{-1}\sum_{s=1}^t \bphi(\xb_{s,a_s};\nnweight_{s-1})(\btheta_0^{\top}\gb(\xb_{s,a_s};\nnweight^{(0)})(\nnweight^*-\nnweight^{(0)})+\xi_s).
\end{align*}
Note that $\Ab_t$ is positive definite as long as $\lambda>0$. Therefore $\|\cdot\|_{\Ab_t}$ and $\|\cdot\|_{\Ab_t}$ are well defined norms. Then for any $\delta\in(0,1)$ by triangle inequality we have 
\begin{align*}
    \| \btheta_{t}-\btheta^*-\Ab_t^{-1}\bPhi_t\bTheta_t\Gb_t(\nnweight^*-\nnweight^{(0)})\|_{\Ab_t}&\leq\lambda\|\btheta^*\|_{\Ab_t^{-1}}+\|\bPhi_t\bxi_t\|_{\Ab_t^{-1}}\\
    &\leq \nu\sqrt{2\log\bigg(\frac{\det(\Ab_t)^{1/2}\det(\lambda\Ib)^{-1/2}}{\delta}\bigg)}+\lambda^{1/2}M
\end{align*}
holds with probability at least $1-\delta$, where in the last inequality we used Lemma \ref{lemma:self_normalized_martingale} and the fact that $\|\btheta^*\|_{\Ab_t^{-1}}\leq\lambda^{-1/2}\|\btheta^*\|_2\leq\lambda^{-1/2}M$ by Lemma \ref{lemma:linearization}. Plugging the definition of $\bPhi_t, \bTheta_t$ and $\Gb_t$ and apply Lemma \ref{lemma:det_sum}, we further have
\begin{align*}
   &\bigg\| \btheta_{t}-\btheta^*-\Ab_t^{-1}\sum_{s=1}^t\bphi(\xb_{s,a_s};\nnweight_{s-1})\btheta_0^{\top}\gb(\xb_{s,a_s};\nnweight^{(0)})(\nnweight^*-\nnweight^{(0)})\bigg\|_{\Ab_t}\\
   &\leq \nu\sqrt{2\big(d\log(1+t(\log \bufferLen K)/\lambda )+\log 1/\delta\big)}+\lambda^{1/2}M,
\end{align*}
where we used the fact that $\|\phi(\xb;\nnweight)\|_2\leq C\sqrt{d\log \bufferLen K}$ by Lemma \ref{lemma:gradient_function_bound}.
\end{proof}

\subsection{Proof of Lemma \ref{lemma:A_inverse_sum_phi}}
We now prove the technical lemma that upper bounds $\|\Ab_t^{-1}\sum_{s=1}^t \bphi_s \zeta_s\|_2$.

\begin{proof}[Proof of Lemma \ref{lemma:A_inverse_sum_phi}]
We first construct auxiliary vectors $\tilde\bphi_t\in\RR^{d+1}$ and matrices $\Bb_t\in\RR^{(d+1)\times(d+1)}$ for all $t=1,\ldots$ in the following way:
\begin{align}\label{eq:def_lifted_phi}
    \tilde\bphi_t=\begin{bmatrix}
    G^{-1}\bphi_t\\
    \sqrt{1-G^{-2}\|\bphi_t\|_2^2}
    \end{bmatrix},\quad
    \Bb_t=\begin{bmatrix}
    \Ab_t^{-1}&\zero_d\\
    \zero_d^{\top} &0
    \end{bmatrix},
\end{align}
where $\zero_d\in\RR^d$ is an all-zero vector. Then by definition we immediately have
\begin{align}\label{eq:equiv_A_B}
     \bigg\|\Ab_t^{-1}\sum_{s=1}^t \bphi_s \zeta_s\bigg\|_2  =   \bigg\|\Bb_t\sum_{s=1}^t \tilde\bphi_s \zeta_s\bigg\|_2.
\end{align}
For all $s=1,2\ldots$, let $\{\beta_{s,j}\}_{j=1}^{d+1}$ be the coefficients of the decomposition of $U^{-1}\zeta_s\tilde\bphi_s$ on the natural basis. Specifically, let $\{\eb_1,\ldots,\eb_{d+1}\}$ be the natural basis of $\RR^{d+1}$ such that the entries of $\eb_j$ are all zero except the $j$-th entry which equals $1$. Then we have
\begin{align}
    U^{-1}\zeta_s\tilde\bphi_s = \sum_{j=1}^d \beta_{s,j}\eb_j, \quad\forall s=1,2,\ldots 
\end{align}
We can conclude that $|\beta_{s,j}| \leq 1 $ since $|\zeta_s|\leq U$ and $\|\tilde\bphi_s\|_2\leq 1$. Moreover,
it is  easy to verify that $\|\tilde\bphi_t\|_2 = 1$ for all $t \geq 1$. Therefore, we have
\begin{align}\label{eq:natural_basis_out}
    \bigg\|\Bb_t \sum_{s=1}^t \tilde\bphi_s\zeta_s\bigg\|_2 &= \bigg\|\Bb_t \sum_{s=1}^t \tilde\bphi_s \tilde\bphi_s^\top \tilde\bphi_s \zeta_s\bigg\|_2\notag \\
    &= \bigg\|\Bb_t \sum_{s=1}^t \tilde\bphi_s \tilde\bphi_s^\top U\sum_{j=1}^d \beta_{s,j} \eb_j\bigg\|_2 \notag \\
    &= U\bigg\|\sum_{j=1}^d\Bb_t \sum_{s=1}^t \tilde\bphi_s \tilde\bphi_s^\top  \beta_{s,j} \eb_j\bigg\|_2\notag \\
    & \leq U\sum_{j=1}^d\bigg\|\Bb_t \sum_{s=1}^t \tilde\bphi_s \tilde\bphi_s^\top  \beta_{s,j}\bigg\|_2\notag\\
    &=U\sum_{j=1}^d\bigg\|\Ab_t^{-1} \sum_{s=1}^t \bphi_s \bphi_s^\top  \beta_{s,j}\bigg\|_2,
\end{align}
where the inequality is due to triangle inequality and the last equation is due to the definition of $\tilde\bphi_t$ and $\Bb_t$ in \eqref{eq:def_lifted_phi}. For each $j=1,\ldots,d+1$, we have
\begin{align}\label{eq:decompose_beta_neg_posi}
    \bigg\|\Ab_t^{-1} \sum_{s=1}^t \bphi_s \bphi_s^\top  \beta_{s,j}\bigg\|_2&=\bigg\|\Ab_t^{-1} \sum_{s\in[t]:\beta_{s,j}\geq 0} \bphi_s \bphi_s^\top  \beta_{s,j}+\Ab_t^{-1} \sum_{s\in[t]:\beta_{s,j}<0} \bphi_s \bphi_s^\top  \beta_{s,j}\bigg\|_2\notag\\
    &\leq\bigg\|\Ab_t^{-1} \sum_{s\in[t]:\beta_{s,j}\geq 0} \bphi_s \bphi_s^\top  \beta_{s,j}\bigg\|_2+\bigg\|\Ab_t^{-1} \sum_{s\in[t]:\beta_{s,j}<0} \bphi_s \bphi_s^\top  (-\beta_{s,j})\bigg\|_2.
\end{align}
Since we have $|\beta_{s,j}|\leq1$, it immediately implies
\begin{align*}
    \Ab_t&=\lambda\Ib+\sum_{s=1}^t\bphi_s\bphi_s^{\top}\succ\sum_{s\in[t]:\beta_{s,j}\geq0}\bphi_s\bphi_s^{\top}\beta_{s,j},\\
    \Ab_t&=\lambda\Ib+\sum_{s=1}^t\bphi_s\bphi_s^{\top}\succ\sum_{s\in[t]:\beta_{s,j}<0}\bphi_s\bphi_s^{\top}(-\beta_{s,j}).
\end{align*}
Further by the fact that $\|\Ab^{-1}\Bb\|_2\leq 1$ for any $\Ab \succ \Bb \succeq 0$, combining the above results with \eqref{eq:decompose_beta_neg_posi} yields
\begin{align*}
    \bigg\|\Ab_t^{-1} \sum_{s=1}^t \bphi_s \bphi_s^\top  \beta_{s,j}\bigg\|_2\leq 2.
\end{align*}
Finally, substituting the above results into \eqref{eq:natural_basis_out} and \eqref{eq:equiv_A_B} we have
\begin{align*}
    \bigg\|\Ab_t^{-1}\sum_{s=1}^t \bphi_s \zeta_s\bigg\|_2 \leq 2Ud,
\end{align*}
which completes the proof. 
\end{proof}

\bibliographystyle{ims}
\bibliography{reference}
\end{document}